\def\eqref#1{equation~\ref{#1}}
\def\1{\bm{1}}
\def\rx{{\textnormal{x}}}
\def\ry{{\textnormal{y}}}
\def\rvx{{\mathbf{x}}}
\def\vx{{\bm{x}}}
\def\vy{{\bm{y}}}
\DeclareMathAlphabet{\mathsfit}{\encodingdefault}{\sfdefault}{m}{sl}
\SetMathAlphabet{\mathsfit}{bold}{\encodingdefault}{\sfdefault}{bx}{n}
\def\gB{{\mathcal{B}}}
\def\gG{{\mathcal{G}}}
\def\sB{{\mathbb{B}}}
\def\sC{{\mathbb{C}}}
\def\sD{{\mathbb{D}}}
\def\sF{{\mathbb{F}}}
\def\sI{{\mathbb{I}}}
\def\sM{{\mathbb{M}}}
\def\sN{{\mathbb{N}}}
\def\sP{{\mathbb{P}}}
\def\sR{{\mathbb{R}}}
\def\sS{{\mathbb{S}}}
\def\sT{{\mathbb{T}}}
\def\sV{{\mathbb{V}}}
\def\sX{{\mathbb{X}}}
\title{Fundamental Limits in Formal Verification of Message-Passing Neural Networks}
\author{Marco S\"alzer \\
School of Electr. Eng. and Computer Science\\
University of Kassel, \
Germany \\
\texttt{marco.saelzer@uni-kassel.de} 
\And
Martin Lange \\
School of Electr. Eng. and Computer Science\\
University of Kassel, \
Germany \\
\texttt{martin.lange@uni-kassel.de} 
}
\begin{document}

\theoremstyle{definition} 
\newtheorem{definition}{Definition}
\newtheorem{lemma}{Lemma}
\newtheorem{example}{Example}
\newtheorem{theorem}{Theorem}
\newtheorem{corollary}{Corollary}

\newcommand{\gcproblem}{\ensuremath{\text{GCP}}}
\newcommand{\ncproblem}{\ensuremath{\text{NVP}^\top}}
\newcommand{\pcp}{\text{PCP}}

\newcommand{\Nat}{\ensuremath{\mathbb{N}}}

\newcommand{\relu}{\ensuremath{\operatorname{re}}}

\newcommand{\tcclass}{\ensuremath{\mathcal{G}_3}}
\newcommand{\tcconclass}{\ensuremath{\mathcal{G}^{\text{con}}_3}}
\newcommand{\pcpclass}{\ensuremath{\mathcal{G}_P}}
\newcommand{\myeq}[1]{\stackrel{\mathclap{\normalfont\mbox{\tiny{\ensuremath{#1}}}}}{\ensuremath{=}}}
\newcommand{\myleq}[1]{\stackrel{\mathclap{\normalfont\mbox{\tiny{\ensuremath{#1}}}}}{\ensuremath{\leq}}}

\newcommand{\Sum}{\ensuremath{\textstyle\sum}}
\newcommand{\gad}[1]{\ensuremath{\left\langle #1\right\rangle}}
\newcommand{\biggad}[1]{\ensuremath{\bigl\langle #1\bigr\rangle}}
\newcommand{\smallgad}[1]{\ensuremath{\langle #1\rangle}}

\newcommand{\restr}[2]{\ensuremath{#1|_{#2}}}
\newcommand{\mset}[1]{\ensuremath{\{\!\{#1\}\!\}}}
\newcommand{\sem}[1]{\ensuremath{[\![#1]\!]}}
\newcommand{\neigh}[2]{\ensuremath{(#1,#2)}}
\newcommand{\colpath}[2]{\ensuremath{(#1,#2)}}

\newcommand{\nhood}{\ensuremath{\mathrm{Neigh}}}
\newcommand{\arp}{\ensuremath{\mathrm{ARP}}}
\newcommand{\arpn}{\ensuremath{\mathrm{ARP}_{\mathsf{node}}}}
\newcommand{\arpg}{\ensuremath{\mathrm{ARP}_{\mathsf{graph}}}}
\newcommand{\orp}{\ensuremath{\mathrm{ORP}}}
\newcommand{\orpn}{\ensuremath{\mathrm{ORP}_{\mathsf{node}}}}
\newcommand{\orpg}{\ensuremath{\mathrm{ORP}_{\mathsf{graph}}}}

\newcommand{\NEXPTIME}{\textsc{NExpTime}\xspace}

\newcommand{\AAnd}{\ensuremath{\textstyle\bigwedge}}
\newcommand{\OOr}{\ensuremath{\textstyle\bigvee}}

\newcommand{\phiunb}{\Phi_{\mathsf{unb}}}

\newcommand{\TODOcomment}[2]{%
	\stepcounter{TODOcounter#1}%
	{\scriptsize\bf$^{(\arabic{TODOcounter#1})}$}%
	\marginpar[\fbox{
		\parbox{2cm}{\raggedleft
			\scriptsize$^{({\bf{\arabic{TODOcounter#1}{#1}}})}$%
			\scriptsize #2}}]%
	{\fbox{\parbox{2cm}{\raggedright 
				\scriptsize$^{({\bf{\arabic{TODOcounter#1}{#1}}})}$%
				\scriptsize #2}}}
}%

\newcommand{\simpleTODOcomment}[2]{%
	\stepcounter{TODOcounter#1}%
	{\bf
		\scriptsize({\arabic{TODOcounter#1}~{#1}})
		{\bfseries{TODO:} #2}
	}
}

\newcounter{TODOcounter}
\newcommand{\TODO}[1]{\TODOcomment{}{#1}}
\newcommand{\TODOX}[1]{\simpleTODOcomment{}{#1}}

\maketitle

\begin{abstract}
Output reachability and adversarial robustness are among the most relevant safety properties of neural networks. 
We show that in the context of Message Passing Neural Networks (MPNN), a common Graph Neural Network (GNN) model, 
formal verification is impossible. In particular, we show that output reachability of graph-classifier MPNN, 
working over graphs of unbounded size, non-trivial degree and sufficiently expressive node labels, cannot be verified formally: there
is no algorithm that answers correctly (with yes or no), given an MPNN, whether there exists some valid input to 
the MPNN such that the corresponding output satisfies a given specification. However, we also show that 
output reachability and adversarial robustness of node-classifier MPNN can be verified formally when a limit on
the degree of input graphs is given a priori. We discuss the implications of these results, for the purpose of
obtaining a complete picture of the principle possibility to formally verify GNN, depending on 
the expressiveness of the involved GNN models and input-output specifications.
\end{abstract}


\section{Introduction}
\label{sec:introduction}

The Graph Neural Network (GNN) framework, i.e.\ models that compute functions over graphs, has become a goto technique for learning tasks over structured data. 
This is not surprising since GNN application possibilities are enormous, ranging from natural sciences (cf.\ \cite{KipfFWWZ18, FoutBSB17}) over recommender systems (cf.\ \cite{Fan0LHZTY19}) to general
knowledge graph applications which itself includes a broad range of applications (cf.\ \cite{ZhouCHZYLWLS20}). 
Naturally, the high interest in GNN and their broad range of applications including safety-critical ones, for instance in traffic situations, impose two necessities: 
first, a solid foundational theory of GNN is needed that describes possibilities and limits of GNN models. 
Second, methods for assessing the safety of GNN are needed, in the best case giving guarantees for certain safety properties.

Compared to the amount of work on performance improvement for GNN or the development of new model variants, the amount of work studying basic theoretical results about GNN is rather limited. 
Some general results have been obtained as follows: independently, \cite{XuHLJ19} and \cite{MorrisRFHLRG19} showed that GNN belonging to the model of 
\emph{Message Passing Neural Networks} (MPNN) (cf.\ \cite{GilmerSRVD17}) are non-universal in the sense that they cannot be trained to distinguish specific graph structures. Furthermore, both 
relate the expressiveness of MPNN to the Weisfeiler-Leman graph isomorphism test. 
This characterisation is thoroughly described and extended by \cite{Grohe21}. \cite{Loukas20} showed that MPNN can be Turing universal under certain conditions 
and gave impossibility results of MPNN with restricted depth and width for solving certain graph problems.

Similarly, there is a lack of work regarding safety guarantees for GNN, or in other words work on \emph{formal verification} of GNN. Research in this direction is almost exclusively concerned with 
certifying \emph{adversarial robustness properties } (ARP) of node-classifying GNN (see Sect.~\ref{sec:related_work} for details). There, usually considered ARP specify a set of valid 
inputs by giving a center graph and a bounded budget of allowed modifications and are satisfied by some GNN if all valid inputs are classified to the same, correct class. However, due to the nature of 
allowed modifications, these properties cover only local parts of the input space, namely neighbourhoods around a center graph.
This local notion of adversarial robustness is also common in formal verification of classical neural networks (NN). However, in NN verification, the absence of misbehaviour of a more global kind is adressed 
using so called \emph{output reachability properties} (ORP) (cf.\ \cite{HuangKRSSTWY20}). 
A common choice of ORP specifies a convex set of valid input vectors and a convex set of valid output vectors and is satisfied by some NN if there is a valid input
that leads to a valid output. Thus, falsifying ORP, specifiying unwanted behaviour as valid outputs, guarantees the absence of respective misbehaviour regarding the set of valid inputs.
To the best of our knowledge there currently is no research directly concerned with ORP of GNN.

This work adresses both of the above mentioned gaps: we present fundamental results regarding the (im-)possibility of formal verification of GNN. We prove that -- in direct contrast to
formal verification of NN -- there are non-trivial classes of ORP and ARP used for MPNN graph classification, that cannot be verified formally. Namely, as soon as the chosen kind of input 
specifications allows for graphs of unbounded size, non-trivial degree and sufficiently expressive labels, formal verification is no longer automatically possible in the following sense: there is 
no algorithm that, given an MPNN and specifications of valid inputs and outputs, answers correctly (yes/no) whether some valid input is mapped to some (in-)valud output. Additionally, we show 
that ORP and ARP of MPNN used for node classification are formally verifiable as soon as the degree of valid input graphs is bounded. 
In the ARP case, this extends the previously known bounds. 

The remaining part of this work is structured as follows: we give necessary definitions in Sect.~\ref{sec:prelim} and a comprehensive overview of our results in Sect.\ref{sec:problems}.
In Sect.~\ref{sec:undecidable_graph} and Sect.~\ref{sec:decidable_node}, we cover formal arguments, with purely technical parts outsourced to App.~\ref{app:specifications} and \ref{app:proofs}.
Finally, we discuss and evaluate our possibility and impossibility results in Sect.\ref{sec:discussion}.
\subsection{Related Work}
\label{sec:related_work}

This paper adresses fundamental questions regarding formal verification of adversarial robustness and output reachability of MPNN and GNN in general.

\cite{GNNBook-ch8-gunnemann} presents a survey on recent developments in research on adversarial attack, defense and robustness of GNN. 
We recapitulate some categorizations made in the survey and rank the corresponding works in our results. First, according to \cite{GNNBook-ch8-gunnemann}
most work considers GNN used for node-classification (for example, \cite{ZugnerAG18,DaiLTHWZS18,WangLSLYZ20,Wu0TDLZ19}) and among such most common are edge 
modifications of a fixed input graph (cf.\ \cite{ZugnerAG18,zuegner2018adversarial,NEURIPS2020_Ma}), but also node injections or deletions are considered (cf.\ \cite{SunWTHH20,GeislerSSZBG21}).
In all cases, the amount of such discrete modifications is bounded, which means that the set of input graphs under consideration is finite and, thus, the maximal degree is bounded. 
Any argument for the possibility of formal verification derivable from these works is subsumed by Theorem~\ref{th:nodeclassdec} here.

Additionally, there is work considering label modifications (cf.\ \cite{ZugnerAG18,Wu0TDLZ19,Takahashi19}), but only in discrete settings or where allowed modifications are 
bounded by box constraints. Again, this is covered by Theorem~\ref{th:nodeclassdec}. There is also work on adversarial robustness of graph-classifier GNN (cf.\ \cite{JinSPZ20,ChenXWX020,BojchevskiKG20}). In all cases, the considered set of input graphs is given by a bounded amount of structural pertubations to some center graph. 
Therefore, this is no contradiction to the result of Corollary~\ref{cor:arpclassundec} as the size of considered graphs is always bounded.

As stated above, to the best of our knowledge, there currently is no work directly concerned with output reachability of MPNN or GNN in general. 


\section{Preliminaries}
\label{sec:prelim}

\paragraph{Undirected, labeled graphs and trees.} A \emph{graph} $\gG$ is a triple $(\sV,\sD,L)$ where $\sV$ is a finite set of nodes, $\sD \subseteq V^2$ a symmetric set of 
edges and $L: V \rightarrow \mathbb{R}^n$ is a labeling function, assigning a vector to each node. 
We define the \emph{neighbourhood} $\nhood(v)$ of a node $v$ as the set $\{v' \mid (v,v') \in \sD\}$.
The \emph{degree} of $\gG$ is the minimal $d \in \sN$ s.t.\ for all $v \in \sV$ we have $|\nhood(v)| \leq d$. If the degree of $\gG$ is $d$
then $\gG$ is also called a \emph{$d$-graph}.
A \emph{tree} $\gB$ is a graph with specified node $v_0$, called the \emph{root}, denoted by $(\sV,\sD,L,v_0)$ and the following properties: 
$\sV = \sV_0 \cup \sV_1 \cup \dotsb \cup \sV_k$ where $\sV_0 = \{v_0\}$, all $\sV_i$ are pairwise disjoint, and whenever 
$(v, v') \in \sD$ and $v \in \sV_i$ then $v' \in \sV_{i+1}$ or vice-versa, and for each node $v \in \sV_i, i \geq 1$ there is exactly one $v' \in \sV_{i-1}$ such that $(v',v) \in \sD$. 
We call $k$ the \emph{depth} of graph $\gB$. A \emph{$d$-tree} is a $d$-graph that is a tree.

\paragraph{Neural networks.}
We only consider classical feed-forward neural networks using ReLU activations given by $\relu(\rx) = \max(0,\rx)$ across all layers 
and simply refer to these as \emph{neural networks} (NN). 
We use relatively small NN as building blocks to describe the structure of more complex ones. 
We call these small NN \emph{gadgets} and typically define a gadget 
by specifying its computed function. This way of defining a gadget is ambiguous as there could be several, even infinitely 
many NN computing the same function. An obvious candidate will usually be clear from context. Let $N$ be a NN.
We call  $N$ \emph{positive} if for all inputs $\vx$ we have $N(\vx) \geq 0$.
We call $N$ \emph{upwards bounded} if there is $\hat{n}$ with $\hat{n} \in \mathbb{R}$ such that $N(\vx) \leq \hat{n}$ for all
inputs $\vx$.

\paragraph{Message passing neural networks.}
A \emph{Message Passing Neural Network (MPNN)} \cite{GilmerSRVD17} consists of layers $l_1, \dotsc, l_k$ followed by a
readout layer $l_{\mathsf{read}}$, which gives the overall output of the MPNN. 
Each regular layer $l_i$ computes  $l_i(\rvx, \sM) = \mathrm{comb}_i(\rvx, \mathrm{agg}_i(\sM))$ where $\sM$ is a multiset, 
a usual set but with duplicates, $\mathrm{agg}_i$ an aggregation function, mapping a multiset of vectors onto a single vector, $\mathrm{comb}_i$ 
a combination function, mapping two vectors of same dimension to a single one. In combination, layers $l_1, \dotsc, l_k$ map each node $v \in \sV$ 
of a graph $\gG = (\sV,\sD,L)$ to a vector $\vx^k_v$ in the following, recursive way: $\vx^0_v = L(v)$ and $\vx^i_v = l_i(\vx^{i-1}_v, \sM_v^{i-1})$ where $\sM^{i-1}_v$ 
is the multiset of vectors $\vx^{i-1}_{v'}$ of all neighbours $v' \in \nhood(v)$. 
We distinguish two kinds of MPNN, based on the form of $l_k$: the readout layer $l_{\mathsf{read}}$ of a \emph{node-classifier MPNN} computes $l_{\mathsf{read}}(v, \sM^k) = \mathit{read}(\vx^k_v)$
where $v$ is some designated node, $\sM^k$ is the multiset of all vectors $\vx^k_v$ and $\mathit{read}$ maps a single vector onto a single vector. The readout layer of
a \emph{graph-classifier MPNN} computes $l_{\mathsf{read}}(\sM^k) = \mathit{read}(\sum_{v \in \sV} \vx^k_v)$. We denote the application of a node-classifier MPNN $N$ to 
$\gG$ and $v$ by $N(\gG,v)$ and the application of a graph-classifier $N$ to $\gG$ by $N(\gG)$. In this paper, we only consider MPNN where the aggregation, 
combination and readout parts are given as follows: $\mathrm{agg}_i(\sM) = \Sum_{\vx \in \sM} \vx$, $\mathrm{comb}_i(\rvx,\sM) 
= N_i(\rvx,\mathrm{agg}_i(\sM))$ where $N_i$ is a NN and $\mathit{read}(\sM)=N_r(\Sum_{\vx \in \sM}\vx)$ respectively
$\mathit{read}(\rvx)=N_r(\rvx)$ where, again, $N_r$ is a NN. 

\paragraph{Input and output specifications.} An \emph{input specification} over graphs (resp.\ pairs of graphs and nodes) $\varphi$ is some formula, set of constraints, listing etc.\ that 
defines a set of graphs (resp.\ pairs of graphs and nodes) $\sS_\varphi$. If a graph $\gG$ (resp.\ pair $(\gG, v)$) is included in $\sS_\varphi$ we say that it is \emph{valid} regarding $\varphi$ or 
that it \emph{satisfies} $\varphi$, written $\gG \models \varphi$, resp.\ $(\gG,v) \models \varphi$. Analogously, an \emph{output specification} over vectors $\psi$ defines 
a set of valid or satisfying vectors of equal dimensions. 
Typically, we denote a set of input specifications by $\Phi$ and a set of output specifications by $\Psi$.

\paragraph{Adversarial robustness and output reachability.} An \emph{adversarial robustness property (ARP)} $P$ is a triple $P=(N, \varphi, \psi)$ 
where $N$ is a GNN, $\varphi$ some input specification and $\psi$ some output specification. We say that $P$ holds iff for all inputs 
$I \models \varphi$ we have $N(I) \models \psi$. We denote the set of all ARP with 
$\varphi \in \Phi$, $\psi \in \Psi$ and graph-classifier or node-classifier by $\arpg(\Phi,\Psi)$ respectively $\arpn(\Phi, \Psi)$. 
We simply write $\arp(\Phi, \Psi)$ when we make no distinction between graph- or node-classifiers.
Analogously, an \emph{output reachability property (ORP)} $Q$ is a triple $Q = (N, \varphi, \psi)$, which holds iff there is input $I \models \varphi$ such that $N(I) \models \psi$,
and we define $\orpg(\Phi,\Psi)$, $\orpn(\Phi, \Psi)$ and $\orp(\Phi, \Psi)$ accordingly.

\paragraph{Formal verification of safety properties.} Let $\mathcal{P}$ be a set of safety properties like $\arpg(\Phi,\Psi)$ or $\orpn(\Phi, \Psi)$.
We say that $\mathcal{P}$ is \emph{formally verifiable}\footnote{In other words, the problem of determining, given an MPNN $N$ and descriptions of valid inputs and outputs, whether the
corresponding property holds, is \emph{decidable}.} if there is an algorithm $A$ satisfying two properties for all $P \in \mathcal{P}$: first, if $P$ holds then $A(P) = \top$ (completeness) and, 
second, if $A(P) = \top$ then $P$ holds (soundness).


\section{Overview of Results}
\label{sec:problems}

\begin{figure}
	\centering 
	\input{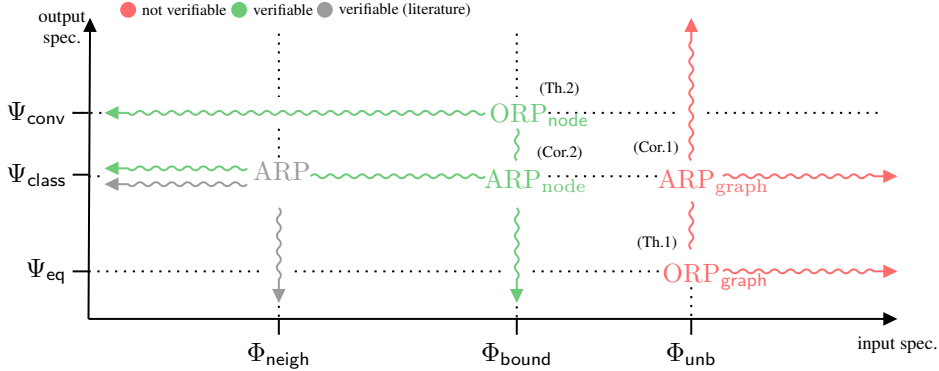}
	\caption{Overview of core results.}
	\label{fig:overview}
\end{figure}

This work presents fundamental (im-)possibility results about formal verification of ARP and ORP of MPNN. 
Obviously, such results depend on the considered sets of specifications. All specification sets used in this work are described
in detail in Appendix~\ref{app:specifications}.

First, we establish a connection between ORP and ARP. For a set of output specifications $\Psi$ we define 
$\overline{\Psi} = \{\overline{\psi} \mid \psi \in \Psi\}$ where $\overline{\psi}$ defines exactly the set of vectors which do not satisfy $\psi$. 
We have that $\overline{\overline{\Psi}} = \Psi$.
\begin{lemma}
	\label{lem:arp_orp}
	$\orp(\Phi, \Psi)$ is formally verifiable if and only if $\arp(\Phi, \overline{\Psi})$ is formally verifiable. 
\end{lemma}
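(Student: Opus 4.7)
The proof will be a straightforward reduction based on the logical duality between ``for all'' and ``exists'', exploiting the fact that complementing the output specification turns an ARP into the negation of an ORP (and vice versa). Concretely, for any GNN $N$, input specification $\varphi$ and output specification $\psi$, the ORP instance $(N,\varphi,\psi)$ holds iff there is some $I \models \varphi$ with $N(I) \models \psi$; equivalently, it fails iff for every $I \models \varphi$ we have $N(I) \models \overline{\psi}$, which is exactly saying that the ARP instance $(N,\varphi,\overline{\psi})$ holds. This single equivalence is the whole content of the lemma.

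I would structure the proof as two symmetric implications. For the ``only if'' direction, assume $\orp(\Phi,\Psi)$ is formally verifiable by some sound and complete algorithm $A$. Given an arbitrary ARP instance $(N,\varphi,\psi') \in \arp(\Phi,\overline{\Psi})$, the complementation convention tells us that $\psi' = \overline{\psi}$ for some $\psi \in \Psi$; we run $A$ on the ORP instance $(N,\varphi,\psi) \in \orp(\Phi,\Psi)$ and return the negation of its answer. Soundness and completeness transfer directly: $A$ answers $\top$ iff the ORP holds iff the ARP fails, so flipping the answer yields a sound and complete decision procedure for $\arp(\Phi,\overline{\Psi})$. The ``if'' direction is completely analogous, using that $\overline{\overline{\Psi}} = \Psi$: given $(N,\varphi,\psi) \in \orp(\Phi,\Psi)$, we view $\psi$ as $\overline{\overline{\psi}}$, decide the ARP instance $(N,\varphi,\overline{\psi}) \in \arp(\Phi,\overline{\Psi})$, and flip the answer.

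The only subtle point, and what I would regard as the main obstacle (albeit a mild one), is to make sure that passing from $\psi$ to $\overline{\psi}$ (and back) is effective, i.e.\ that a syntactic representation of $\overline{\psi}$ can be computed from a representation of $\psi$. This is implicit in the statement of the lemma because $\overline{\Psi}$ is defined pointwise and the class $\Psi$ is assumed to come equipped with a representation; otherwise the algorithmic reductions above would not be constructive. I would briefly note this, pointing to the concrete specification classes defined in Appendix~\ref{app:specifications} where complementation is trivially computable (e.g.\ swapping $\leq$ with $>$ in linear constraints), so that the reduction is genuinely effective for all specification families used in the paper. With this observation in place, the two reductions above complete the proof.
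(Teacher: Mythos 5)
Your proof is correct and uses exactly the same argument as the paper: the duality that the ORP $(N,\varphi,\psi)$ holds iff the ARP $(N,\varphi,\overline{\psi})$ fails, so an algorithm for one is turned into one for the other by complementing the output specification and flipping the answer (with $\overline{\overline{\Psi}}=\Psi$ handling the converse direction). Your extra remark on the effectiveness of computing $\overline{\psi}$ is a reasonable observation that the paper leaves implicit, but it does not change the substance of the argument.
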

\begin{proof}
    Note that the ARP $(N,\varphi,\psi)$ holds iff the ORP $(N,\varphi,\overline{\psi})$ does not hold. Hence, any algorithm for
    either of these can be transformed into an algorithm for the other problem by first complementing the output specification and
    flipping the yes/no answer in the end.  
\end{proof}
This connection between ARP and ORP, while usually not given fomally, is folklore. For example, see the survey by \cite{HuangKRSSTWY20},
describing the left-to-right direction of Lemma~\ref{lem:arp_orp}.

Our first core contribution is that, in contrast to verification of ORP of classical NN, there are natural sets of graph-classifier ORP, 
which cannot be verified formally. Let $\phiunb$ be a set of graph specifications, allowing for unbounded size, non-trivial degree and 
sufficiently expressive labels, and let $\Psi_\mathsf{eq}$ be a set of vector specifications 
able to check if a certain dimension of a vector is equal to some fixed integer.
\begin{theorem}[Section~\ref{sec:undecidable_graph}]
	\label{th:graphclassundec}
	$\orpg(\phiunb, \Psi_\mathsf{eq})$ is not formally verifiable.
\end{theorem}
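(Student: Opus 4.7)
The plan is to reduce from Post's Correspondence Problem (PCP), which is well-known to be undecidable. Given a PCP instance $P = \{(u_1,v_1),\dots,(u_k,v_k)\}$ over some finite alphabet, I would construct effectively an MPNN $N_P$, an input specification $\varphi_P \in \phiunb$, and an output specification $\psi_P \in \Psi_{\mathsf{eq}}$ such that the ORP $(N_P,\varphi_P,\psi_P)$ holds iff $P$ admits a solution. Since PCP is undecidable, this rules out any uniform algorithm for $\orpg(\phiunb,\Psi_{\mathsf{eq}})$.

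For the encoding, I would take $\varphi_P$ to describe chain-shaped graphs representing a candidate PCP solution $i_1,\dots,i_n$ of arbitrary length $n$: the graph is a path (degree $\leq 2$) whose nodes carry labels specifying the chosen tile index, together with local bookkeeping data such as the position in the chain and the partial lengths of the $u$- and $v$-concatenations up to that node. This stays inside $\phiunb$, which is explicitly designed to permit unbounded size, non-trivial degree and sufficiently expressive labels.

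The MPNN $N_P$ then has to perform two checks simultaneously. First, structural correctness via local message passing: a constant number of layers suffices to verify that neighbouring nodes have bookkeeping labels that agree along the chain (for instance, that the partial lengths at a node equal those of its predecessor plus $|u_i|$ and $|v_i|$ for its tile $i$). Second, the global equality of the strings $u_{i_1}\cdots u_{i_n}$ and $v_{i_1}\cdots v_{i_n}$, which is captured by the sum-aggregated readout: each node contributes to two output coordinates a positional base-$B$ value (with $B$ exceeding the alphabet size) encoding the slice of the $u$- and $v$-string it produces, at the offset determined by its bookkeeping labels. Summing across the graph yields numerical encodings of the full $u$- and $v$-strings, and the base-$B$ encoding is injective, so these numbers coincide iff the strings do. Any local-consistency violation is converted into a strictly positive additive penalty in the readout, and $\psi_P \in \Psi_{\mathsf{eq}}$ then asserts that the relevant output coordinate equals a fixed target value (say $0$), which is attainable iff the input encodes a genuine PCP solution.

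The main technical obstacle is realising all of this inside an MPNN whose building blocks are sum-aggregation and ReLU feed-forward networks of fixed depth and width. Positional weighting, tile-indexed lookups of characters, and per-node penalty terms must all be implementable in that restricted model, and the penalty magnitudes must be chosen to strictly dominate the range of positional contributions so that malformed inputs cannot accidentally evaluate to the target. Since the number of message-passing layers is fixed independently of $n$, all structural checks must be performed within a bounded neighbourhood, leaving the global string comparison entirely to the sum-aggregated readout — making the careful quantitative design of weights, offsets and penalties the heart of the argument.
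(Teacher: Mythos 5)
Your top-level strategy is the same as the paper's: reduce from PCP by encoding candidate solutions as labelled chain-like graphs, let the MPNN convert constraint violations into non-negative quantities, and use an output specification ``output $=0$'' from $\Psi_{\mathsf{eq}}$ (the paper packages this as the intermediate problem \gcproblem{} via Lemma~\ref{lem:con_undec_verif}, then goes PCP $\to$ GLP $\to$ DGLP $\to$ \gcproblem{}). However, the step you defer as ``the main technical obstacle'' is precisely where your construction breaks, so the proposal has a genuine gap. First, a node cannot contribute ``a positional base-$B$ value at the offset determined by its bookkeeping labels'': the offset grows unboundedly with the chain length, and $j\mapsto B^{-j}$ (or $B^{j}$) is not piecewise linear, so no ReLU combine/readout network computes it from a stored offset. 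The natural repair --- store the positional weight itself as a label and enforce the recurrence $w_{\mathrm{succ}}=B^{-|u_i|}\,w$ locally --- requires multiplying a real label by a tile-indicator label, which ReLU networks can only emulate exactly under boundedness and discreteness side conditions that must themselves be enforced by further gadgets; this is exactly the issue that forces the paper to restrict to the DGLP fragment (discretised dimensions, disjunctions with at most one non-discrete disjunct) before translating into an MPNN. Second, aggregation is a \emph{sum} over the neighbourhood: a path node never sees its predecessor's and successor's bookkeeping separately, only their sum, so checks of the form ``successor's partial length equals mine plus $|u_i|$'' cannot be written down as you state them; the paper needs the $L,M,R$ direction colouring along each chain precisely to disentangle the two neighbours into different label dimensions.

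Third, the soundness direction (output exactly $0$ implies $P$ solvable) is not argued at all, and it is the hard one. Local consistency plus a readout comparison does not exclude malformed global structure: multiple components, label-consistent cycles, several or zero ``start'' nodes, or chains that never terminate in an end marker. The paper rules these out with graph-level (summed) conditions --- the counting trick in $\psi_{\mathrm{CL}}$ comparing the number of $c_i$-coloured nodes to the end node's id, exactly-one start/end constraints, and the ladder plus third chain to force positional agreement of letters and of tile boundaries --- and your sketch would need analogous readout-level counting arguments, none of which are given. Finally, the framing ``penalty magnitudes must strictly dominate the positional contributions'' cannot work as stated: labels are arbitrary reals under $\phiunb$, so nothing is a priori bounded and no finite penalty dominates; what is needed instead is the paper's discipline of building every gadget to be positive and equal to $0$ exactly when its constraint is satisfied, so that the readout is a sum of non-negative terms and hits $0$ only on genuine encodings.
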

Let $\Psi_{\mathsf{leq}}$ be a set of vector specifications, satisfied by vectors where for each dimension there is another dimension which is greater or equal.
Now, $\Psi_\mathsf{class} := \overline{\Psi}_{\mathsf{leq}}$ is a set of vector specifications, defining vectors where a certain dimension is greater 
than all others or, in other words, outputs which can be interpreted as an exact class assignment.
We can easily alter the proof of Theorem~\ref{th:graphclassundec} to argue that $\orpg(\phiunb, \Psi_\mathsf{leq})$ is also not formally verifiable 
(see Section~\ref{sec:undecidable_graph}). Then, Lemma~\ref{lem:arp_orp} implies the following result for ARP of graph-classifier MPNN.
\begin{corollary}
	\label{cor:arpclassundec}
	$\arpg(\phiunb, \Psi_{\mathsf{class}})$ is not formally verifiable.
\end{corollary}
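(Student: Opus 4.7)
The plan is to derive this corollary as an essentially immediate consequence of Lemma~\ref{lem:arp_orp} together with an altered version of Theorem~\ref{th:graphclassundec}. The only conceptual work is to line up the complementation of output specifications correctly, because the corollary concerns $\Psi_{\mathsf{class}}$ on the ARP side, whereas Theorem~\ref{th:graphclassundec} as stated concerns $\Psi_\mathsf{eq}$ on the ORP side.

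First, I would invoke the altered form of Theorem~\ref{th:graphclassundec} alluded to in the preceding paragraph, namely that $\orpg(\phiunb, \Psi_{\mathsf{leq}})$ is not formally verifiable. I would treat this as given, since the paper explicitly claims it can be obtained by a minor modification of the proof of Theorem~\ref{th:graphclassundec} in Section~\ref{sec:undecidable_graph}. Second, I would apply Lemma~\ref{lem:arp_orp} with $\Phi = \phiunb$ and $\Psi = \Psi_{\mathsf{leq}}$, which states that $\orpg(\phiunb, \Psi_{\mathsf{leq}})$ is formally verifiable if and only if $\arpg(\phiunb, \overline{\Psi}_{\mathsf{leq}})$ is formally verifiable. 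The contrapositive then yields that $\arpg(\phiunb, \overline{\Psi}_{\mathsf{leq}})$ is not formally verifiable.

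Finally, I would conclude by unfolding the definition $\Psi_\mathsf{class} := \overline{\Psi}_{\mathsf{leq}}$ from the paragraph preceding the corollary, so that $\arpg(\phiunb, \overline{\Psi}_{\mathsf{leq}}) = \arpg(\phiunb, \Psi_{\mathsf{class}})$, establishing the claim. The only possible obstacle is making sure the reader has access to the altered ORP result; if one prefers a self-contained argument, it suffices to restate the alteration as a separate one-line remark (or to invoke it directly inline), since by construction of $\Psi_\mathsf{class}$ as the complement of $\Psi_\mathsf{leq}$ this is the only nontrivial ingredient beyond Lemma~\ref{lem:arp_orp}. There is no genuine technical hurdle in this corollary: all the real work lives inside the proof of Theorem~\ref{th:graphclassundec} and its minor modification for $\Psi_\mathsf{leq}$.
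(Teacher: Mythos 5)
Your proposal is correct and matches the paper's own argument: the paper likewise establishes that $\orpg(\phiunb, \Psi_{\mathsf{leq}})$ is not formally verifiable by a minor modification of the proof of Theorem~\ref{th:graphclassundec} (adding a second, constantly-zero output dimension to $N_\mathcal{L}$ and using the corresponding variant of Lemma~\ref{lem:con_undec_verif}), and then applies Lemma~\ref{lem:arp_orp} together with $\Psi_\mathsf{class} = \overline{\Psi}_{\mathsf{leq}}$ to obtain the corollary. No gap beyond the alteration you explicitly deferred, which is exactly where the paper also places that work.
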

Thus, as soon as we consider ORP or ARP of graph-classifier MPNN over parts of the input space, including graphs of unbounded size, with
sufficient degree and expressive labels, it is no longer guaranteed that they are formally verifiable.

To better understand the impact of our second core contribution, we make a short note on classical NN verification. 
There, a common choice of specifications over vectors are conjunctions of linear inequalities $\sum_\sI c_i \rx_i \leq b$ 
where $c_i, b$ are rational constants and $\rx_i$ are dimensions of a vector. Such specifications define convex sets and, thus, we call the set of 
all such specifications $\Psi_{\mathsf{conv}}$.
Let $\Phi_{\mathsf{bound}}$ be a set of graph-node specifications, bounding the degree of valid graphs and using constraints on labels in a bounded distance to
the center node which can be expressed by vector specifications as described above.\footnote{We assume that model checking of specifications from 
$\Phi_{\mathsf{bound}}$ is decidable. Otherwise, verification of corresponding ORP becomes undecidable due to trivial reasons.}
Now, it turns out that as soon as we bound the degree of input graphs, ORP of node-classifier MPNN with label constraints and output specifications 
from $\Psi_{\mathsf{conv}}$ can be verified formally.
\begin{theorem}[Section~\ref{sec:decidable_node}]
	\label{th:nodeclassdec}
	$\orpn(\Phi_{\mathsf{bound}}, \Psi_\mathsf{conv})$ is formally verifiable.
\end{theorem}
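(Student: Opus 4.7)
The plan is to exploit the fact that a node-classifier MPNN with $k$ layers has a strictly local receptive field. Since $N$ consists of $k$ message-passing layers followed by a readout on the center node only, the value $N(\gG, v)$ depends solely on the subgraph induced by the $k$-hop neighborhood of $v$ in $\gG$, together with the labels on those nodes. The specifications in $\Phi_{\mathsf{bound}}$ guarantee a uniform degree bound $d$, so this neighborhood contains at most $D := 1 + d + d^2 + \dotsb + d^k$ nodes. Any parts of a valid input graph outside this neighborhood are irrelevant to the output. This reduces the problem from graphs of unbounded size to structured objects of a priori bounded size.

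Next I would enumerate all rooted graph structures of degree at most $d$ on at most $D$ nodes, up to isomorphism. There are only finitely many such structures, say $S_1, \dotsc, S_m$, and this enumeration is effective. For each $S_j$, the full MPNN computation becomes a concrete ReLU network $F_j$ taking as input the concatenated label vectors of the $|S_j|$ nodes in $S_j$; this network is obtained by mechanically unrolling the $k$ layers of $N$ across the edges of $S_j$ and then applying the readout to the root. By assumption, $F_j$ is a feed-forward ReLU network with summation fan-in bounded by the degree.

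I would then translate the verification instance on $S_j$ into a purely numerical satisfiability question. The input specification $\varphi \in \Phi_{\mathsf{bound}}$ imposes (conjunctions of) linear inequalities on the labels of nodes within bounded distance of the center, and $\Phi_{\mathsf{bound}}$ guarantees that it is effective to extract these constraints once the structure is fixed; the output specification $\psi \in \Psi_{\mathsf{conv}}$ gives linear inequalities on $F_j$'s output. Deciding whether there exist real label vectors for the nodes of $S_j$ such that the combined linear constraints on inputs and outputs are met is the standard ReLU-with-linear-constraints satisfiability problem, which is decidable (e.g., by enumerating the finitely many activation patterns of the rectifiers, yielding polyhedra whose nonemptiness can be tested by LP feasibility). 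The algorithm answers yes iff some $S_j$ yields a satisfiable instance.

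The main technical obstacle is the first step: justifying rigorously that the output $N(\gG, v)$ depends only on the $k$-hop neighborhood, and that the enumeration of structures $S_j$ is both complete (every valid input produces some computation captured by some $S_j$) and sound (each $S_j$ can actually be realized by extending it to a full graph satisfying $\varphi$). The degree bound from $\Phi_{\mathsf{bound}}$ is precisely what makes this enumeration finite; without it, one would have unbounded branching and the reduction would fail. Once this locality/enumeration argument is in place, the remaining reduction to ReLU-network feasibility under linear constraints is routine. The details will be delegated to the appendix.
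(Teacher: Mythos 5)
Your proposal is correct and follows the same overall strategy as the paper: exploit the bounded receptive field of a $k$-layer node-classifier MPNN, enumerate finitely many bounded structures, unroll the MPNN on each fixed structure into an ordinary feed-forward ReLU network, and decide the resulting NN output-reachability instance under linear input/output constraints, which is decidable (you propose activation-pattern enumeration plus LP feasibility; the paper invokes the known decidability results of Katz et al.\ and S\"alzer--Lange). The decomposition differs in one respect: the paper proves a \emph{tree-model property} (Lemma~\ref{lem:tree_property}), unravelling a witness graph into a tree of non-backtracking paths of depth $\max(k',k'')$, where $k'$ is the number of layers and $k''$ the radius of the label constraints, and then enumerates the finitely many unlabeled $d$-trees of that depth; you instead enumerate rooted degree-$d$ graphs on at most $1+d+\dots+d^{k}$ nodes, i.e.\ candidate $k$-hop balls. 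Your route makes the ``soundness'' obligation you flag essentially vacuous: a satisfiable enumerated structure needs no extension at all, since it is itself a graph of degree at most $d$ and hence a valid witness, provided the constraints of $\varphi$ are extracted relative to that structure. The one point you should make explicit is the case $k''>k'$: completeness of your enumeration then rests on the fact that the label constraints in $\Phi_{\mathsf{bound}}$ are universally quantified over a bounded neighbourhood and therefore survive passing to the induced $k'$-ball, because distances can only grow in an induced subgraph; the paper sidesteps this by building depth $\max(k',k'')$ into the tree-model property from the start. Your direct locality argument plays the role of the paper's unravelling induction and is, if anything, slightly more elementary, at the price of enumerating general bounded rooted graphs rather than only trees.
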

Again, Lemma~\ref{lem:arp_orp} implies a similar result for ARP of node-classifier MPNN.
Obviously, we have $\Psi_\mathsf{class} \subseteq \overline{\Psi}_\mathsf{conv}$.
\begin{corollary}
	\label{cor:arp_result}
	$\arpn(\Phi_{\mathsf{bound}}, \Psi_\mathsf{class})$ is formally verifiable.
\end{corollary}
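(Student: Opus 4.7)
The plan is to derive this purely from the two tools that are already in hand, namely Lemma~\ref{lem:arp_orp} (the ORP/ARP duality via output-specification complement) and Theorem~\ref{th:nodeclassdec} (formal verifiability of $\orpn(\Phi_{\mathsf{bound}}, \Psi_\mathsf{conv})$). No new construction, no new gadget, no new algorithm is needed; the argument is a short chain of implications.

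First I would invoke Lemma~\ref{lem:arp_orp} with $\Phi = \Phi_{\mathsf{bound}}$ and $\Psi = \overline{\Psi}_\mathsf{class}$. Reading the lemma in that direction, formal verifiability of $\arpn(\Phi_{\mathsf{bound}}, \Psi_\mathsf{class})$ is equivalent to formal verifiability of $\orpn(\Phi_{\mathsf{bound}}, \overline{\Psi}_\mathsf{class})$, using that $\overline{\overline{\Psi}}_\mathsf{class} = \Psi_\mathsf{class}$. Next I would use the inclusion noted just before the corollary, namely $\Psi_\mathsf{class} \subseteq \overline{\Psi}_\mathsf{conv}$: complementing elementwise turns this into $\overline{\Psi}_\mathsf{class} \subseteq \Psi_\mathsf{conv}$. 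Consequently, every property in $\orpn(\Phi_{\mathsf{bound}}, \overline{\Psi}_\mathsf{class})$ is also a property in $\orpn(\Phi_{\mathsf{bound}}, \Psi_\mathsf{conv})$.

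Finally I would observe that formal verifiability is monotone under subclass inclusion: any sound and complete algorithm that works for every property in $\orpn(\Phi_{\mathsf{bound}}, \Psi_\mathsf{conv})$ trivially works, without modification, for every property in the subclass $\orpn(\Phi_{\mathsf{bound}}, \overline{\Psi}_\mathsf{class})$. Theorem~\ref{th:nodeclassdec} provides such an algorithm for the larger class, so the smaller class is formally verifiable, and by the equivalence established in the first step the same holds for $\arpn(\Phi_{\mathsf{bound}}, \Psi_\mathsf{class})$. There is really no obstacle to overcome: the genuine technical content sits inside Theorem~\ref{th:nodeclassdec}, and the elementary syntactic inclusion $\Psi_\mathsf{class} \subseteq \overline{\Psi}_\mathsf{conv}$ is the only thing specific to the class-assignment setting that the proof exploits.
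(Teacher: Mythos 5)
Your proposal is correct and is essentially the paper's own argument: the paper also derives the corollary from Lemma~\ref{lem:arp_orp}, Theorem~\ref{th:nodeclassdec}, the inclusion $\Psi_{\mathsf{class}} \subseteq \overline{\Psi}_{\mathsf{conv}}$, and monotonicity of verifiability under restriction, the only (immaterial) difference being that the paper complements on the $\Psi_{\mathsf{conv}}$ side (first obtaining verifiability of $\arpn(\Phi_{\mathsf{bound}}, \overline{\Psi}_{\mathsf{conv}})$ and then restricting to $\Psi_{\mathsf{class}}$), whereas you complement on the $\Psi_{\mathsf{class}}$ side and land inside $\orpn(\Phi_{\mathsf{bound}}, \Psi_{\mathsf{conv}})$. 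Same duality, same inclusion, same monotonicity step, traversed in a different order.
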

This byproduct of Theorem~\ref{th:nodeclassdec} considerably extends the set of input specifications for which ARP of node-classifier MPNN is known to be formally verifiable.
In particular, the literature (see Section~\ref{sec:related_work}) gives indirect evidence that $\arpn(\Phi_\mathsf{neigh}, \Psi_\mathsf{class})$ can be verified formally 
where $\Phi_\mathsf{neigh}$ is a set of specifications defined by a center graph and a bounded budget of allowed structural modifications as well as label alternations
restricted using box constraints, which can be expressed using vector specifications of the form given above. Thus, $\Phi_\mathsf{neigh} \subseteq \Phi_\mathsf{\mathsf{bound}}$.

The results above, in addition to some immediate implications, reveal major parts of the landscape of MPNN formal verification, depicted in Figure~\ref{fig:overview}.
The horizontal, resp.\ vertical axis represents sets of input, resp.\ output specifications, loosely ordered by expressiveness. The three most important impressions to take 
from this visualisation are: first, the smaller the classes of specifications, the stronger an impossibility result becomes. Note that Theorem~\ref{th:graphclassundec} and 
Corollary~\ref{cor:arpclassundec} naturally extend to more expressive classes of specifications 
(indicated by the red, squiggly arrows up and to the right). Second, results about the possibility to do formal verification grow in strength with the expressive power of the 
involved specification formalisms; Theorem~\ref{th:nodeclassdec} and Corollary~\ref{cor:arp_result} extend naturally to smaller classes (indicated by the green, squiggly arrows 
down and to the left). Third, the results presented here are not ultimately tight in the sense that there is a part of the landscape, between $\Phi_{\mathsf{bound}}$ and 
$\phiunb$, for which the status of decidability of formal verification remains unknown. See Section~\ref{sec:discussion} for further discussion.


\section{The Impossibility of Formally Verifiying ORP and ARP of Graph-Classifier MPNN Over Unbounded Graphs}
\label{sec:undecidable_graph}

The ultimate goal of this section is to show that $\orpg(\phiunb, \Psi_{\mathsf{eq}})$ is not formally verifiable. 
Note that we use a weak form of $\phiunb$ here. See Appendix~\ref{app:specifications} for details. 
To do so, we relate the formal verification of $\orpg(\phiunb, \Psi_{\mathsf{eq}})$ to the following decision problem:
given a graph-classifier $N$ with a single output dimension, the question is whether there is graph $\gG$ such that $N(\gG) = 0$. 
We call this problem \emph{graph-classifier problem ($\gcproblem$)}. 
\begin{lemma}
	\label{lem:con_undec_verif}
	If $\gcproblem$ is undecidable then $\orpg(\phiunb, \Psi_{\mathsf{eq}})$ is not formally verifiable.
\end{lemma}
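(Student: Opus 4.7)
The plan is to establish the contrapositive by a simple many-one reduction: given an instance of \gcproblem, I will construct an ORP instance in $\orpg(\phiunb, \Psi_{\mathsf{eq}})$ whose truth value coincides with the answer to \gcproblem. If formal verification of $\orpg(\phiunb, \Psi_{\mathsf{eq}})$ were possible, this would yield a decision procedure for \gcproblem, contradicting the hypothesis.

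Concretely, fix an instance of \gcproblem, i.e.\ a graph-classifier MPNN $N$ with a single output dimension. First, I pick an output specification $\psi_0 \in \Psi_{\mathsf{eq}}$ stating that the (single) output dimension equals the integer $0$; this is available by the very definition of $\Psi_{\mathsf{eq}}$. Next, I pick an input specification $\varphi_{N} \in \phiunb$ whose satisfying set $\sS_{\varphi_N}$ is precisely the class of graphs over which \gcproblem quantifies for $N$, namely all graphs of unbounded size, non-trivial degree, and labels of matching dimension. Here I rely on the fact, made precise in Appendix~\ref{app:specifications}, that $\phiunb$ is rich enough to contain such a (nearly trivial) $\varphi_N$, which is exactly what the phrase ``weak form of $\phiunb$'' in the statement is pointing at. The resulting ORP instance is $Q := (N, \varphi_{N}, \psi_0)$, and its construction from $N$ is clearly effective.

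By definition, $Q$ holds iff there exists some $\gG \models \varphi_{N}$ with $N(\gG) \models \psi_0$, i.e.\ iff some graph $\gG \in \sS_{\varphi_N}$ satisfies $N(\gG) = 0$. Since $\sS_{\varphi_N}$ is (by construction) the \gcproblem search space for $N$, this is equivalent to a positive answer for the \gcproblem instance $N$. Hence $Q$ is a yes-instance of the verification problem for $\orpg(\phiunb, \Psi_{\mathsf{eq}})$ iff $N$ is a yes-instance of \gcproblem. Any sound and complete algorithm for the former would therefore decide the latter, contradicting the assumption that \gcproblem is undecidable, and the lemma follows.

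The only nontrivial step is the second one: justifying that $\phiunb$ really does contain an input specification $\varphi_N$ whose satisfying set coincides with (or at least is in the appropriate sense cofinal with) the \gcproblem search space for $N$. If $\phiunb$ is defined so as to allow a ``trivial'' specification admitting all legal inputs of the right label dimension, the argument is essentially a one-line reduction; otherwise, I would need to slightly massage $N$ so that its behaviour on the restricted class $\sS_{\varphi_N}$ still faithfully encodes its behaviour on all graphs (for example by pre-composing $N$ with a layer that neutralises any structural constraint imposed by $\varphi_N$). In either case, the heavy lifting is left to the subsequent proof that \gcproblem itself is undecidable, which Lemma~\ref{lem:con_undec_verif} merely routes into the verification problem.
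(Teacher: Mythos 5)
Your proposal is correct and follows essentially the same route as the paper: the paper also argues by contraposition, pairing the given graph-classifier $N$ with the trivial input specification (available in the weak form of $\phiunb$, e.g.\ $\mathsf{true}$) and the output specification $\ry = 0$ from $\Psi_{\mathsf{eq}}$, so that a sound and complete verifier for $\orpg(\phiunb, \Psi_{\mathsf{eq}})$ would decide \gcproblem. Your closing hedge about massaging $N$ is unnecessary given how $\phiunb$ is defined, but it does not affect correctness.
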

\begin{proof}
	By contraposition. Suppose $\orpg(\phiunb, \Psi_{\mathsf{eq}})$ was formally verifiable. Then there is an algorithm 
	$A$ such that for each for each $(N,\mathsf{true}, \ry=0) \in \orpg(\phiunb, \Psi_{\mathsf{eq}})$ 
	we have: 
	$A$ returns $\top$ if and only if $(N,\mathsf{true}, \ry=0)$ holds. But then $A$ can be used to decide $\gcproblem$.
\end{proof}
Using this lemma, in order to prove Theorem~\ref{th:graphclassundec}, it suffices to show that $\gcproblem$ is not formally verifiable, 
which we will do in the remaining part of this section. 
The proof works as follows: first, we define a satisfiability problem for a logic of graphs labeled with vectors, which we call 
\emph{Graph Linear Programming} (GLP) as it could be seen as an extension of ordinary linear programming on graph structures. 
We then prove that GLP is undecidable by a reduction from \cite{postVariantRecursivelyUnsolvable1946}'s Correspondence Problem (PCP).
This intermediate steps handles much of the intricacies of encoding discrete structures -- here: words witnessing a solution of a PCP instance -- 
by means of vectors and the  operations that can be carried out on them inside a MPNN. 
From the form of the reduction we infer that the graph linear programs in its image are of a particular shape which can be used to define a -- therefore
also undecidable -- fragment, called \emph{Discrete Graph Linear Programming} (DGLP). We then show how this fragment can be reduced to
\gcproblem{}, thus establishing its undecidability in a way that separates the structural from the arithmetical parts in a reduction from
PCP to \gcproblem. As a side-effect, with GLP we obtain a relatively natural undecidable problem on graphs and linear real arithmetic which
may possibly serve to show further undecidability results on similar graph neural network verification problems. 

\subsection{From PCP to GLP}

We begin by defining the Graph Linear Programming problem GLP. Let $\sX= \{\rx_1, \dotsc, \rx_n\}$ be a set of variables. A \emph{node condition} 
$\varphi$ is a formula given by the syntax 
\begin{equation*}
	\varphi ::= \Sum_{i=1}^n a_i \rx_i + b_i (\odot \rx_i) \leq c \mid \varphi \land \varphi \mid \varphi \lor \varphi
\end{equation*}
where $a_j,b_j,c \in \mathbb{Q}$. Intuitively, the $\rx_i$ are variables for a vector of $n$ real values, constituting a graph's node label, 
and the operator $\odot$ describes access to the node's neighbourhood, resp.\ their labels.

We write $\mathrm{sub}(\varphi)$ for the set of subformulas of $\varphi$ and $\mathrm{Var}(\varphi)$ for the set of variables occurring 
inside $\varphi$. We use the abbreviation $t = c$ for $t \leq c \land -t \leq -c$. 

Let $\gG = (\sV, \sD, L)$ be a graph with $L: \sV \rightarrow \mathbb{R}^n$. A node condition $\varphi$ induces a set of nodes of $\gG$, written 
$\sem{\varphi}^\gG$, and is defined inductively as follows.
\begin{align*}
	&v \in \sem{\Sum_{i=1}^n a_i \rx_i + b_i (\odot \rx_i) \leq c}^\gG& &\text{iff}& & \Sum_{i=1}^n a_i L(v)_i + b_i (\Sum_{v' \in N_v} L(v')_i) \leq c \\
	&v \in \sem{\varphi_1 \land \varphi_2}^\gG& &\text{iff}& &v \in \sem{\varphi_1}^\gG \cap \sem{\varphi_2}^\gG \\
	&v \in \sem{\varphi_1 \lor \varphi_2}^\gG& &\text{iff}& &v \in \sem{\varphi_1}^\gG \cup \sem{\varphi_2}^\gG
\end{align*}
If $v \in \sem{\varphi}^\gG$ then we say that $v$ \emph{satisfies} $\varphi$.  

A \emph{graph condition} $\psi$ is a formula given by the syntax
$\psi ::= \Sum_{i=1}^n a_i \rx_i \leq c \mid \psi \land \psi$,
where $a_i,c \in \mathbb{Q}$. The semantics of $\psi$, written $\sem{\psi}$, is the subclass of graphs $\gG=(\sV,\sD,L)$ with $L: \sV \rightarrow \mathbb{R}^n$ such that
\begin{align*}
	&\gG \in \sem{\Sum_{i=1}^n a_i \rx_i \leq c}& &\text{iff}& & \Sum_{i=1}^n a_i (\Sum_{v \in \sV} L(v)_i) \leq c,& \\
	&\gG \in \sem{\psi_1 \land \psi_2}& &\text{iff}& &\gG \in \sem{\psi_1} \cap \sem{\psi_2}.&
\end{align*}
Again, if $\gG \in \sem{\psi}$ then we say that $\gG$ satisfies $\psi$. 

The problem GLP is defined as follows: given a graph condition $\psi$ and a node condition $\varphi$ over the same set of variables 
$\sX=\{\rx_1, \dotsc, \rx_n\}$, decide whether there is a graph $\gG = (\sV,\sD,L)$ with $L: \sV \rightarrow \mathbb{R}^n$ such that $\gG$ satisfies $\psi$ 
and all nodes in $\gG$ satisfy $\varphi$. Such an $\mathcal{L}=(\psi,\varphi)$ is called a \emph{graph linear program}, which we also abbreviate
as GLP. It will also be clear from the context whether GLP denotes a particular program or the entire decision problem. 

As stated above, we show that GLP is undecidable via a reduction from \emph{Post's Correspondence Problem (PCP)}: given  
$P = \{(\alpha_1, \beta_1), (\alpha_2, \beta_2), \dotsc, (\alpha_k, \beta_k)\} \subseteq \Sigma^* \times \Sigma^*$ for some 
alphabet $\Sigma$, decide whether there is a non-empty sequence of indices 
$i_1, i_2, \dotsc, i_l$ from $\{1, \dotsc, k\}$ such that $\alpha_{i_1}\alpha_{i_2} \dotsb \alpha_{i_l} = \beta_{i_1}\beta_{i_2} \dotsb \beta_{i_l}$. 
The $\alpha_i,\beta_i$ are also called \emph{tiles}. PCP is known to be undecidable when $|\Sigma| \geq 2$, i.e.\ we can always assume $\Sigma = \{a,b\}$. 
For example, consider the solvable instance $P_0=\{(aab,aa), (b,abb), (ba,bb)\}$. It is not hard to see that $I=1,3,1,2$ is a solution for
$P_0$. 
Furthermore, the corresponding sequence of tiles can be visualised as shown in Figure~\ref{fig:pcpsolution}.
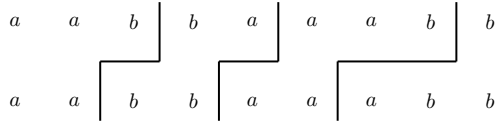
\begin{figure}
	\centering
	
	\resizebox{.5\textwidth}{!}{
		\tikzset{every picture/.style={line width=0.75pt}} 

\begin{tikzpicture}[x=0.75pt,y=0.75pt,yscale=-1,xscale=1]
	
	\draw    (190,90) -- (190,60) ;
	\draw    (190,60) -- (220,60) ;
	\draw    (220,30) -- (220,60) ;
	\draw    (250,60) -- (250,90) ;
	\draw    (250,60) -- (280,60) ;
	\draw    (280,30) -- (280,60) ;
	\draw    (310,90) -- (310,60) ;
	\draw    (370,60) -- (310,60) ;
	\draw    (370,60) -- (370,30) ;
	
	\draw (147,40) node  [xscale=0.8,yscale=0.8]  {$a$};
	\draw (177,40) node  [xscale=0.8,yscale=0.8]  {$a$};
	\draw (207,40) node  [xscale=0.8,yscale=0.8]  {$b$};
	\draw (237,40) node  [xscale=0.8,yscale=0.8]  {$b$};
	\draw (267,40) node  [xscale=0.8,yscale=0.8]  {$a$};
	\draw (297,40) node  [xscale=0.8,yscale=0.8]  {$a$};
	\draw (327,40) node  [xscale=0.8,yscale=0.8]  {$a$};
	\draw (357,40) node  [xscale=0.8,yscale=0.8]  {$b$};
	\draw (387,40) node  [xscale=0.8,yscale=0.8]  {$b$};
	\draw (147,80) node  [xscale=0.8,yscale=0.8]  {$a$};
	\draw (177,80) node  [xscale=0.8,yscale=0.8]  {$a$};
	\draw (207,80) node  [xscale=0.8,yscale=0.8]  {$b$};
	\draw (237,80) node  [xscale=0.8,yscale=0.8]  {$b$};
	\draw (267,80) node  [xscale=0.8,yscale=0.8]  {$a$};
	\draw (297,80) node  [xscale=0.8,yscale=0.8]  {$a$};
	\draw (327,80) node  [xscale=0.8,yscale=0.8]  {$a$};
	\draw (357,80) node  [xscale=0.8,yscale=0.8]  {$b$};
	\draw (387,80) node  [xscale=0.8,yscale=0.8]  {$b$};

\end{tikzpicture}
	}
	\caption{A solution for the PCP instance $P_0$.}
	\label{fig:pcpsolution}
\end{figure}
The upper word is produced by the $\alpha_i$ parts of the tiles and the lower one by the $\beta_i$. The end of one and beginning of the next tile are visualised by
the vertical part of the step lines. 

\begin{theorem}
	\label{thm:glpundec}
	GLP is undecidable.
\end{theorem}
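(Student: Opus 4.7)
The plan is to reduce PCP to GLP. Given an instance $P = \{(\alpha_1,\beta_1),\ldots,(\alpha_k,\beta_k)\}$ over $\Sigma = \{a,b\}$, I construct a graph linear program $\mathcal{L}_P = (\psi_P,\varphi_P)$ whose models correspond exactly to solution sequences $i_1,\ldots,i_l$ for $P$. The intended shape of a model is the ``column-strip'' picture of a PCP solution (as in Figure~\ref{fig:pcpsolution}): a path $v_1 - v_2 - \cdots - v_n$ of length $n = |\alpha_{i_1}\cdots\alpha_{i_l}|$, where each node represents one column of the double word and carries a real-valued label encoding (i) a position counter $p$, (ii) the top ($\alpha$-side) letter at this column, (iii) the bottom ($\beta$-side) letter, (iv) a flag indicating whether this column is the start of a new $\alpha$-tile and, if so, its index in $\{1,\ldots,k\}$, (v) the analogous $\beta$-side start flag, and (vi) some auxiliary dimensions (constantly $1$, a boolean endpoint marker, etc.)\ used in the structural constraints below.

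The node condition $\varphi_P$ has two jobs. First, it forces the graph to actually be a path. Using a constantly-$1$ indicator dimension, the sum over neighbours reads off a node's degree, which I require by disjunction to be either $1$ or $2$, with the endpoint flag consistent with the degree. Adding an interior constraint $\odot p = 2p$ together with endpoint variants $\odot p = p+1$ and $\odot p = p-1$ forces the position label to behave like an arithmetic progression along the non-cyclic component, ruling out branches and cycles. Second, $\varphi_P$ enforces column-local tile consistency: if a node carries an open-$\alpha$-tile-$i$ flag, then in the following $|\alpha_i|$ columns the top letters must spell $\alpha_i$ and the next open-$\alpha$ flag must be set exactly at the column after the tile ends. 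Since $\varphi_P$ can only refer to a node's own label and the \emph{sum} of its neighbours' labels, this look-ahead is implemented by propagating a ``letters-still-owed'' countdown label from one node to the next, with analogous machinery on the $\beta$-side.

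The graph condition $\psi_P$ handles the genuinely global constraints: that there are exactly two endpoints, that the total number of open-$\alpha$ and open-$\beta$ flags matches the sum of the chosen tile lengths so that the flagged tiles jointly cover the whole path, and that at least one tile-start flag is switched on so as to rule out the empty solution. Combined with the per-column constraint in $\varphi_P$ that the top and bottom letters at the same column agree, this globally pins down the two concatenations $\alpha_{i_1}\cdots\alpha_{i_l}$ and $\beta_{i_1}\cdots\beta_{i_l}$ to be the same word, which is exactly the statement that $i_1,\ldots,i_l$ solves $P$.

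Correctness then splits into two routine directions: given a solution $i_1,\ldots,i_l$ of $P$, labelling a path $v_1-\cdots-v_n$ as above yields a model of $(\psi_P,\varphi_P)$; and conversely, any model of $(\psi_P,\varphi_P)$ has, by the structural part of $\varphi_P$, a single path-shaped component whose column labels can be read off as a valid tile sequence. The main obstacle is exactly the one imposed by GLP's limited expressivity: node conditions cannot address individual neighbours but only the sum of their labels, and graph conditions are merely linear sums over all nodes. Forcing path structure and implementing sequential look-ahead using only linear constraints on such aggregates --- so that the only graphs satisfying $(\psi_P,\varphi_P)$ are genuine encodings of PCP solutions --- is where the real work lies; once this encoding is fixed, undecidability of GLP follows immediately from undecidability of PCP.
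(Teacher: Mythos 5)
Your high-level strategy (reduction from PCP, encoding a solution as a labelled path/chain structure with tile-start flags and per-column letter agreement) matches the paper's, but the encoding as you describe it does not reduce PCP: it is missing the constraint that the $\alpha$-side and $\beta$-side tiles are used \emph{with the same index sequence in the same order}. Your correctness claim is that per-column agreement of top and bottom letters, plus the coverage conditions, ``is exactly the statement that $i_1,\ldots,i_l$ solves $P$'' --- but nothing in your $\varphi_P$ or $\psi_P$ ties the index written at the $j$-th $\alpha$-tile-start column to the index at the $j$-th $\beta$-tile-start column. Concretely, take $P=\{(ab,b),(b,ab)\}$. This instance has no solution (no tile has $\alpha_i$ and $\beta_i$ starting with the same letter), yet your program is satisfiable: label a three-node path so that the top word is $\alpha_1\alpha_2=abb$ and the bottom word is $\beta_2\beta_1=abb$. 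Letters agree in every column, every column is covered by exactly one tile on each side, there is at least one tile start, and even the per-index \emph{counts} of $\alpha$-starts and $\beta$-starts coincide --- so no graph condition of the form ``linear sum over all nodes'' can detect the mismatch, since only the \emph{order} of the indices differs. This is not a routine detail: it is the core difficulty of encoding PCP here, and the paper spends most of its construction on it, introducing a third chain with one node per tile, connected to the corresponding $\alpha$- and $\beta$-tile-start nodes, together with monotone ``id'' constraints forcing these connections to be non-crossing (the PCP-structure), which is what forces the two index sequences to coincide.

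A secondary issue, which you do flag as ``where the real work lies'' but never resolve, is that node conditions only see the \emph{sum} of neighbour labels, so your countdown/look-ahead propagation and your arithmetic-progression trick cannot tell the left neighbour from the right one (e.g.\ $\odot p = 2p$ is also satisfied by constant labels on a cycle). The paper resolves this with a period-three direction colouring $(L,M,R)$ along each chain, so that a node can isolate its successor's contribution in the aggregated sum, and it rules out stray cycles and extra components via counting conditions in the graph condition (number of nodes of a colour equals the stored end-id). Without both of these mechanisms --- order synchronisation across the two sides and direction disambiguation under sum aggregation --- the proposed reduction does not establish undecidability of GLP.
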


\begin{proof}[Proof sketch] 
	We sketch the proof here and give a full version in Appendix~\ref{app:undec}. The proof is done by establishing a reduction from
	PCP. The overall idea is to translate each PCP instance $P$ to a GLP $\mathcal{L}_P$ with the property that $P$ is solvable 
	if and only if $\mathcal{L}_P$ is satisfiable. Thus, the translation must be such that $\mathcal{L}_P$ is only satisfiable by graphs that
	encode a valid solution of $P$. The encoding is depicted for the solution of $P_0$ shown in Figure~\ref{fig:pcpsolution} 
	in form of solid lines and nodes in Figure~\ref{sec:undecidable;fig:pcp_encoding}. 
	The word $w_\alpha=\alpha_1\alpha_3\alpha_1\alpha_2$ is represented by the chain of yellow nodes from left to right in such way that there is a node for each symbol $w_i$ of $w_\alpha$. 
	If $w_i=a$ then $\rx_a=1$ and $\rx_b=0$ of the corresponding node and vice-versa if $w_i=b$. Analogously, $\beta_1\beta_3\beta_1\beta_2$ is represented by the blue chain. 
	The borders between two tiles are represented as edges between the yellow and blue nodes corresponding to the starting positions of a tile.
	The encoding as a graph uses additional auxiliary nodes, edges and label dimensions, in order to ensure that the labels along the yellow and 
	blue nodes indeed constitute a valid PCP solution, i.e.\ the sequences of their letter labels are the same, and they are built from corresponding
	tiles in the same order. In Figure~\ref{sec:undecidable;fig:pcp_encoding}, these auxiliary nodes and edges are indicated by the dashed parts.
\end{proof}
	\begin{figure}
	\centering
		\input{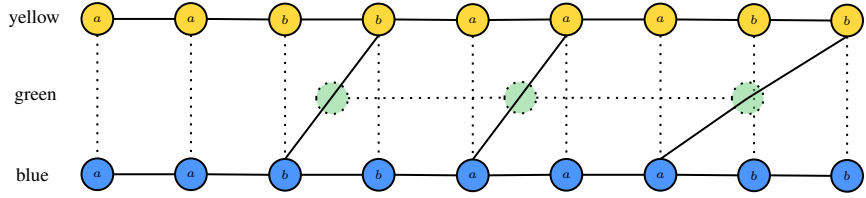}
	\caption{Encoded solution $I$ of PCP instance $P_0$.} 
	\label{sec:undecidable;fig:pcp_encoding}
\end{figure}

GLP seems to be too expressive in all generality for a reduction to \gcproblem{}, at least it does not seem (easily) possible to mimic arbitrary disjunctions in an MPNN. However, 
the node conditions $\varphi$ resulting from the reduction from PCP to GLP are always of a very specific form: $\varphi = \varphi' \land \varphi_{\text{discr}}$
where $\varphi_\text{discr}$ is built like above with $\sM_i \subseteq \mathbb{N}$ and $\varphi'$ has the following property. Let $\sX$ be the set of non-discrete dimensions 
as given by $\varphi_\text{discr}$. For each $\varphi_1 \lor \varphi_2 \in \mathrm{sub}(\varphi')$ it is the case that $\mathrm{Var}(\varphi_1) \cap \sX = \emptyset$ or 
$\mathrm{Var}(\varphi_2) \cap \sX = \emptyset$. In other words, in each disjunction in $\varphi'$ at most one disjunct contains non-discretised dimensions. We call this 
fragment of GLP \emph{Discrete Graph Linear Programming} (DGLP) and, likewise, also use DGLP for the corresponding decision problem. The observation that the reduction
function from PCP constructs graph linear programs which fall into DGLP (see Appendix~\ref{app:undec} for details) immediately gives us the following result.
\begin{corollary}
	\label{cor:dglpundec}
	DGLP is undecidable.
\end{corollary}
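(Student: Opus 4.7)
The plan is to observe that the reduction from PCP used to prove Theorem~\ref{thm:glpundec} already produces GLP instances that lie in the DGLP fragment. Since PCP is undecidable, this same reduction then witnesses undecidability of DGLP directly --- no new reduction is required, so the corollary is obtained by inspection of the construction in Appendix~\ref{app:undec}.

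Concretely, I would revisit the construction of $\mathcal{L}_P = (\psi_P, \varphi_P)$ and partition the variables of $\mathcal{L}_P$ into two groups: the \emph{discrete} variables, whose admissible values are forced into some finite $\sM_i \subseteq \mathbb{N}$ by an explicit sub-condition $\varphi_{\text{discr}}$ of $\varphi_P$, and the remaining \emph{continuous} variables, which make up the set $\sX$. The discrete variables correspond to the combinatorial data of the encoded PCP solution --- node color (yellow/blue/green), letter label ($a$/$b$), tile index, and similar indicator flags --- while the continuous variables are used only to enforce arithmetic side-conditions such as additive relationships between neighbour labels.

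The central step is a syntactic audit of every disjunction $\varphi_1 \lor \varphi_2$ occurring in $\varphi_P$. I would classify each such disjunction by its role in the encoding: (i) disjunctions choosing between discrete alternatives, such as ``this node is yellow, blue, or green'' or ``this symbol is $a$ or $b$'', mention no variable from $\sX$ at all, so both disjuncts trivially satisfy the DGLP restriction; (ii) the usual discretisation gadgets enforcing $x \in \{0,1\}$ involve only the single discrete variable $x$; and (iii) any disjunction that splits on a discrete case while enforcing a continuous constraint in one branch can be arranged so the other branch only mentions the discrete indicator. In all three patterns, at least one disjunct satisfies $\mathrm{Var}(\varphi_i) \cap \sX = \emptyset$, as the DGLP fragment demands.

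The main obstacle is precisely the completeness of this audit: one must be confident that no disjunction in the construction genuinely places non-discrete variables on both sides, since such a disjunction would leave DGLP. If a case distinction in the encoding were formulated in a way that mixes continuous constraints on both disjuncts, it would need to be refactored by reading off the relevant discrete indicator first and pushing the continuous inequality into a non-disjunctive conjunct gated by that indicator --- a refactoring which is always possible because the discretisation layer $\varphi_{\text{discr}}$ already exposes all the combinatorial choices as Boolean-valued variables. Once this verification is complete, $P \mapsto \mathcal{L}_P$ is seen to be a reduction from PCP into DGLP, and Corollary~\ref{cor:dglpundec} follows immediately from the undecidability of PCP.
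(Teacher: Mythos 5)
Your proposal matches the paper's own argument: the paper proves Corollary~\ref{cor:dglpundec} exactly by observing that the node and graph conditions $\varphi_{\text{CL}}$, $\varphi_{\text{PS}}$, $\varphi_P$ produced by the PCP reduction already lie in the DGLP fragment (every disjunction either involves only the $\{0,1\}$-discretised colour variables or, in the implication gadgets $c \rightarrow \varphi$, has one purely discrete disjunct), so the reduction of Theorem~\ref{thm:glpundec} directly witnesses undecidability of DGLP. Your syntactic audit of the disjunctions is precisely the check the paper defers to Appendix~\ref{app:undec}, so this is the same proof.
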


\subsection{From DGLP to \gcproblem{}}

\begin{theorem}
	\label{thm:gcundec}
	\gcproblem{} is undecidable.
\end{theorem}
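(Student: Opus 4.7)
The natural strategy is to reduce from DGLP, undecidable by Corollary~\ref{cor:dglpundec}. Given a DGLP instance $\mathcal{L} = (\psi, \varphi)$ over variables $\rx_1, \ldots, \rx_n$, I would construct a graph-classifier MPNN $N_\mathcal{L}$ with a single output dimension such that there is a graph $\gG$ with $N_\mathcal{L}(\gG) = 0$ if and only if $\mathcal{L}$ is satisfiable. The design principle is that $N_\mathcal{L}$ outputs a non-negative ``violation'' that vanishes exactly when every node of $\gG$ satisfies $\varphi$ and $\gG$ satisfies $\psi$.

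The first MPNN layer computes, for each node $u$, a value $v_\varphi(u) \ge 0$ with $v_\varphi(u) = 0$ iff $u \models \varphi$. An atomic inequality $\Sum_i a_i \rx_i + b_i(\odot \rx_i) \leq c$ translates to $\relu\bigl(\Sum_i a_i L(u)_i + b_i \Sum_{u' \in \nhood(u)} L(u')_i - c\bigr)$, which fits into $\mathrm{comb}_1$ applied to $L(u)$ and $\mathrm{agg}_1(\sM_u^0) = \Sum_{u' \in \nhood(u)} L(u')$, since aggregation is sum and $\mathrm{comb}_1$ is an arbitrary NN. Conjunctions become sums of child violations (zero iff all children are zero); disjunctions become minima, realised on non-negative inputs by the ReLU gadget $\min(a,b) = a - \relu(a - b)$. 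In the same layer I would pass the original label dimensions through unchanged to separate output coordinates, so that the label sums remain available at readout.

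The readout then sums $v_\varphi(u)$ over all nodes (automatic via the sum aggregation in the readout) and computes a second violation $v_\psi(\gG) \ge 0$ with $v_\psi(\gG) = 0$ iff $\gG \models \psi$, using the passed-through label sums $\Sum_u L(u)_i$: each atomic graph inequality $\Sum_i a_i \rx_i \leq c$ becomes $\relu\bigl(\Sum_i a_i (\Sum_u L(u)_i) - c\bigr)$, and outer conjunctions become sums of such ReLU terms. The single output dimension of $N_\mathcal{L}$ is $\Sum_u v_\varphi(u) + v_\psi(\gG)$, which is non-negative and, as a sum of non-negative terms, equals zero iff every summand does, i.e.\ iff every node satisfies $\varphi$ and $\gG$ satisfies $\psi$. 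This yields precisely the desired equivalence with satisfiability of $\mathcal{L}$, and by Lemma~\ref{lem:con_undec_verif} reduces DGLP to \gcproblem{}.

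The routine parts are dimension bookkeeping and a structural induction on $\varphi$ and $\psi$ verifying the violation gadgets. The main obstacle is the constraint that node conditions must be computable in a single application of an MPNN layer: the combination function receives only the node label and the sum of neighbour labels, so the syntax of $\varphi$ must match this shape exactly -- which it does by design, since atoms use only $\rx_i$ and $\odot \rx_i$. The DGLP restriction on disjunctions interacting with discrete dimensions is used on the PCP side to obtain undecidability and imposes no extra work on the MPNN side, since the min gadget handles arbitrary disjunctions uniformly. One should also verify that the resulting NNs are well-defined feed-forward ReLU networks of depth bounded by the syntactic depth of $\varphi$ and $\psi$.
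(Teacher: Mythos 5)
Your reduction is sound and reaches the same conclusion as the paper, but it handles the one genuinely delicate step --- disjunction --- in a different way. The paper encodes atoms with the clipped, upwards-bounded gadget $\gad{\rx \leq m}$ and represents $\varphi_1 \lor \varphi_2$ by $\relu(f_2 - k\relu(1-f_1))$; this only works when $f_1$ is guaranteed to take values in $\{0\} \cup [1,\infty)$, which is precisely why the paper detours through the DGLP fragment (in every disjunction one disjunct mentions only discretised dimensions) and needs the upwards-boundedness of $f_2$. Your $\min$-gadget $\min(a,b) = a - \relu(a-b)$, applied to non-negative violation values, is zero iff one of the two violations is zero, so it implements disjunction with no discreteness or boundedness assumption at all; consequently your construction reduces full GLP (Theorem~\ref{thm:glpundec}), not just DGLP, to \gcproblem{}, which makes Corollary~\ref{cor:dglpundec} dispensable and simplifies the argument --- at the price of losing the paper's stated side benefit of separating the structural (PCP-to-DGLP) from the arithmetical (DGLP-to-\gcproblem{}) content, and of contradicting the paper's informal remark that arbitrary disjunctions are hard to mimic in an MPNN. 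Your per-node violation $v_\varphi$ is indeed computable inside $\mathrm{comb}_1$, since atoms refer only to $\rx_i$ and $\odot\rx_i$ and a ReLU network can compose sums, ReLUs of affine forms, and minima of non-negative quantities. The one point to spell out in your ``bookkeeping'' is the pass-through of labels to the readout for $\psi$: with ReLU applied on every layer, a possibly negative label dimension cannot be copied verbatim, so you should forward $\relu(L(v)_i)$ and $\relu(-L(v)_i)$ in separate coordinates and recombine them affinely inside $N_r$ before applying the $\relu$ of the $\psi$-atoms. With that detail fixed, the equivalence $N_\mathcal{L}(\gG)=0$ iff $\gG$ satisfies $\mathcal{L}$ holds exactly as you argue, and undecidability of \gcproblem{} follows.
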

\vspace*{-4.5mm}
\begin{proof}
	By a reduction from DGLP. Given a DGLP $\mathcal{L} = (\varphi,\psi)$ we construct an MPNN $N_\mathcal{L}$ that gives a specific output, namely $0$, if and only if its 
	input graph $\gG$ satisfies $\mathcal{L}$ and therefore is a witness for $\mathcal{L} \in \text{GLP}$. 
	Let $m,n \in \mathbb{R}$ with $m \leq n$ and $\sM = \{i_1, i_2, \dotsc, i_k\} \subseteq \mathbb{N}$ such that $i_j \leq i_{j+1}$ for all $j \in \{1, \dotsc, k-1\}$. 
	We use the auxiliary gadget $\gad{\rx \in [m;n]} := \relu(\relu(\rx-n) - \relu(\rx-(n+1)) + \relu(m-\rx) + \relu((m-1)-\rx))$ to define the gadgets 
	\begin{align*}	
		\gad{\rx \leq m} :=& \relu(\relu(\rx-m) - \relu(\rx-(m+1))) \text{ and} \\
		\gad{\rx \in \sM} :=& \relu\big(\gad{\rx \in [i_1;i_k]} + \Sum_{j=1}^{k-1} \relu(\frac{(i_{j+1} - i_j)}{2} - (\relu(\rx-\frac{i_j + i_{j+1}}{2}) + \relu(\frac{i_j + i_{j+1}}{2}-\rx)))\big).
	\end{align*}
	Each of the gadgets above fulfils specific properties which can be inferred from their 
	functional forms without much effort:
	let $r \in \mathbb{R}$.
	Then, $\gad{r \leq m} = 0$ if and only if $r \leq m$, and 
	$\gad{r \in \sM} = 0$ if and only if $r \in \sM$.
	Furthermore, both gadgets are positive and $\gad{\rx \leq m}$ is upwards bounded for all $m$ by $1$ with the property that
	$|r-m| \geq 1$ implies $\gad{r \leq m}=1$. We give a formal proof in Appendix~\ref{app:dglp_to_gc}.
	We use $\gad{\rx = m}$ as an abbreviation for $\gad{-\rx \leq m} + \gad{\rx \leq m}$.
	
	The input size of $N_\mathcal{L}$ equals the amount of variables occurring in $\varphi$ and $\psi$. $N_\mathcal{L}$ has 
	one layer with two output dimensions $\ry^1_\text{discr}$ and $\ry^1_\text{cond}$ and the readout layer has a single output dimension $\ry^r$. 
	The subformula $\varphi_\text{discr}=\bigwedge_{i \in \sI}\bigvee_{m \in \sM_i} \rx_i=m$ is represented by $\ry^1_\text{discr} = \sum_{i \in I} \gad{\rx_i \in \sM_i}$ and then checked using $\gad{\ry^1_\text{discr}=0}$ in the
	readout layer. 
	
	The remaining part of $\varphi$ is represented in output dimension $\ry^1_\text{cond}$ in the following way. Obviously, an atomic $\leq$-formula is represented using a $\gad{\rx \leq m}$ gadget.
	A conjunction $\varphi_1 \land \varphi_2$ is represented by a sum of two gadgets $f_1 + f_2$ where $f_i$ represents $\varphi_i$. For this to work, we need the properties that all used gadgets are positive
	and that their output is $0$ when satisfied.
	
	To represent a disjunction $\varphi_1 \lor \varphi_2$ where $f_1$ and $f_2$ are the gadgets representing $\varphi_1$ resp.\ $\varphi_2$, we need the fact that $\mathcal{L}$ is a DGLP. 
	W.l.o.g.\ suppose that $\varphi_1$ only contains discrete variables and that $\varphi_\text{discr}$ is satisfied. Then we get: if $\varphi_1$ is not satisfied then the 
	output of $f_1$ must be greater or equal to $1$. The reason for this is the following. If the property of some
	$\gad{\rx \leq m}$-gadget is not satisfied its output must be 1, still under the assumption that its input includes discrete variables only.  
	Furthermore, as $\gad{\rx \leq m}$ is positive and upwards bounded, the value of $f_2$ must be bounded by some value $k \in \mathbb{R}^{>0}$. 
	Therefore, we can represent the disjunction using $\relu(f_2 - k\relu(1-f_1))$. Note that this advanced gadget is also positive and upwards bounded. 
	Again, the value of $\ry^1_\text{cond}$ is checked in the readout layer using $\gad{\ry^1_\text{cond}=0}$.
	The graph condition $\psi$ is represented using a sum of $\gad{\rx \leq m}$ gadgets.
	
	Thus, we can effectively translate a DGLP $\mathcal{L}$ into an MPNN  $N_\mathcal{L}$ such that there 
	is a graph $\gG$ with $N_\mathcal{L}(\gG)=0$ if and only if $\gG$ satisfies $\mathcal{L}$, i.e.\ $\mathcal{L} \in \text{DGLP}$. 
	This transfers the undecidability from DGLP to \gcproblem{}.
\end{proof}

\paragraph{Proof of Theorem~\ref{th:graphclassundec}.} The statement of the theorem is an
immediate consequence of the results of Theorem~\ref{thm:gcundec} and Lemma~\ref{lem:con_undec_verif}. \qed

The proof for Corollary~\ref{cor:arpclassundec} follows the exact same line of arguments, but we consider the following decision problem: 
given a graph-classifier $N$ with two output dimension, the question is whether there is graph $\gG$ such that 
$(N(\gG)_1 \leq N(\gG)_2) \land (N(\gG)_1 \leq N(\gG)_2)$. We call this $\gcproblem_\leq$. 
Obviously, the statement of Lemma~\ref{lem:con_undec_verif} also holds for $\gcproblem_\leq$ and 
$\orpg(\Phi_{\mathsf{bound}}, \Psi_{\mathsf{leq}})$. Proving that $\gcproblem_\leq$ is undecidable is also done via reduction
from DGLP with only minimal modifications of MPNN $N_\mathcal{L}$ constructed in the proof of Theorem~\ref{thm:gcundec}:
we add a second output dimension to $N_\mathcal{L}$ which constantly outputs $0$. The correctness of the reduction follows immediately.


\section{Formally Verifiying ORP and ARP of Node-Classifier MPNN Over Bounded Graphs Is Possible}
\label{sec:decidable_node}

In order to prove Theorem~\ref{th:nodeclassdec}, we argue that there is a naive algorithm verifying $\orpn(\Phi_{\mathsf{bound}}, \Psi_{\mathsf{conv}})$ formally.
Consider a node-classifier $N$ with $k$ layers and consider some graph $\gG$ with specified node $v$ such that $N(\gG,v) = \vy$.
The crucial insight is that there is a tree $\gB$ of finite depth $k$ and with root $v_0$ such that $N(\gB,v_0) = \vy$. 
The intuitive reason for this is that $N$ can update the label of node $v$ using information from neighbours of $v$ of distance at most $k$. For example, assume
that $k=2$ and $\gG,v$ are given as shown on the left side of Figure~\ref{sec:decidable;fig:idea} where the information of a node, given by its label, is depicted using
different colours $y$ (yellow), $b$ (blue), $r$ (red), $g$ (green) and $p$ (pink). As $N$ only includes two layers, information from the unfilled (white) nodes are not relevant for
the computation of $N(\gG,v)$ as their distance to $v$ is greater than 2. Take the tree $\gB$ on the right side of Figure~\ref{sec:decidable;fig:idea}. We get that $N(\gG,v) = N(\gB,v_0)$.

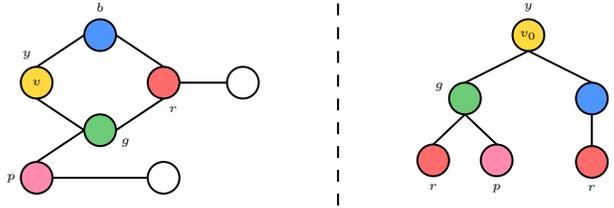
\begin{figure}
	\centering
		\tikzset{every picture/.style={line width=0.75pt}} 

\begin{tikzpicture}[x=0.75pt,y=0.75pt,yscale=-.8,xscale=.8]
	
	\draw  [fill={rgb, 255:red, 255; green, 217; blue, 61 }  ,fill opacity=1 ] (60,70) .. controls (60,64.48) and (64.48,60) .. (70,60) .. controls (75.52,60) and (80,64.48) .. (80,70) .. controls (80,75.52) and (75.52,80) .. (70,80) .. controls (64.48,80) and (60,75.52) .. (60,70) -- cycle ;
	\draw  [fill={rgb, 255:red, 255; green, 107; blue, 107 }  ,fill opacity=1 ] (140,70) .. controls (140,64.48) and (144.48,60) .. (150,60) .. controls (155.52,60) and (160,64.48) .. (160,70) .. controls (160,75.52) and (155.52,80) .. (150,80) .. controls (144.48,80) and (140,75.52) .. (140,70) -- cycle ;
	\draw  [fill={rgb, 255:red, 77; green, 150; blue, 255 }  ,fill opacity=1 ] (100,40) .. controls (100,34.48) and (104.48,30) .. (110,30) .. controls (115.52,30) and (120,34.48) .. (120,40) .. controls (120,45.52) and (115.52,50) .. (110,50) .. controls (104.48,50) and (100,45.52) .. (100,40) -- cycle ;
	\draw  [fill={rgb, 255:red, 255; green, 255; blue, 255 }  ,fill opacity=1 ] (190,70) .. controls (190,64.48) and (194.48,60) .. (200,60) .. controls (205.52,60) and (210,64.48) .. (210,70) .. controls (210,75.52) and (205.52,80) .. (200,80) .. controls (194.48,80) and (190,75.52) .. (190,70) -- cycle ;
	\draw  [fill={rgb, 255:red, 107; green, 203; blue, 119 }  ,fill opacity=1 ] (100,100) .. controls (100,94.48) and (104.48,90) .. (110,90) .. controls (115.52,90) and (120,94.48) .. (120,100) .. controls (120,105.52) and (115.52,110) .. (110,110) .. controls (104.48,110) and (100,105.52) .. (100,100) -- cycle ;
	\draw  [fill={rgb, 255:red, 255; green, 138; blue, 174 }  ,fill opacity=1 ] (60,130) .. controls (60,124.48) and (64.48,120) .. (70,120) .. controls (75.52,120) and (80,124.48) .. (80,130) .. controls (80,135.52) and (75.52,140) .. (70,140) .. controls (64.48,140) and (60,135.52) .. (60,130) -- cycle ;
	\draw  [fill={rgb, 255:red, 255; green, 255; blue, 255 }  ,fill opacity=1 ] (140,130) .. controls (140,124.48) and (144.48,120) .. (150,120) .. controls (155.52,120) and (160,124.48) .. (160,130) .. controls (160,135.52) and (155.52,140) .. (150,140) .. controls (144.48,140) and (140,135.52) .. (140,130) -- cycle ;
	\draw    (70,60) -- (100,40) ;
	\draw    (70,80) -- (100,100) ;
	\draw    (120,40) -- (150,60) ;
	\draw    (120,100) -- (150,80) ;
	\draw    (160,70) -- (190,70) ;
	\draw    (100,100) -- (70,120) ;
	\draw    (80,130) -- (140,130) ;
	\draw    (340,70) -- (380,50) ;
	\draw    (380,50) -- (420,70) ;
	\draw    (340,90) -- (320,109) ;
	\draw    (360,109) -- (340,90) ;
	\draw    (420,110) -- (420,90) ;
	\draw  [fill={rgb, 255:red, 255; green, 217; blue, 61 }  ,fill opacity=1 ] (370,40) .. controls (370,34.48) and (374.48,30) .. (380,30) .. controls (385.52,30) and (390,34.48) .. (390,40) .. controls (390,45.52) and (385.52,50) .. (380,50) .. controls (374.48,50) and (370,45.52) .. (370,40) -- cycle ;
	\draw  [fill={rgb, 255:red, 77; green, 150; blue, 255 }  ,fill opacity=1 ] (410,80) .. controls (410,74.48) and (414.48,70) .. (420,70) .. controls (425.52,70) and (430,74.48) .. (430,80) .. controls (430,85.52) and (425.52,90) .. (420,90) .. controls (414.48,90) and (410,85.52) .. (410,80) -- cycle ;
	\draw  [fill={rgb, 255:red, 107; green, 203; blue, 119 }  ,fill opacity=1 ] (330,80) .. controls (330,74.48) and (334.48,70) .. (340,70) .. controls (345.52,70) and (350,74.48) .. (350,80) .. controls (350,85.52) and (345.52,90) .. (340,90) .. controls (334.48,90) and (330,85.52) .. (330,80) -- cycle ;
	\draw  [fill={rgb, 255:red, 255; green, 107; blue, 107 }  ,fill opacity=1 ] (310,119) .. controls (310,113.48) and (314.48,109) .. (320,109) .. controls (325.52,109) and (330,113.48) .. (330,119) .. controls (330,124.52) and (325.52,129) .. (320,129) .. controls (314.48,129) and (310,124.52) .. (310,119) -- cycle ;
	\draw  [fill={rgb, 255:red, 255; green, 107; blue, 107 }  ,fill opacity=1 ] (410,120) .. controls (410,114.48) and (414.48,110) .. (420,110) .. controls (425.52,110) and (430,114.48) .. (430,120) .. controls (430,125.52) and (425.52,130) .. (420,130) .. controls (414.48,130) and (410,125.52) .. (410,120) -- cycle ;
	\draw  [fill={rgb, 255:red, 255; green, 138; blue, 174 }  ,fill opacity=1 ] (350,119) .. controls (350,113.48) and (354.48,109) .. (360,109) .. controls (365.52,109) and (370,113.48) .. (370,119) .. controls (370,124.52) and (365.52,129) .. (360,129) .. controls (354.48,129) and (350,124.52) .. (350,119) -- cycle ;
	\draw  [dash pattern={on 4.5pt off 4.5pt}]  (260,20) -- (260,150) ;
	
	\draw (70,70) node  [font=\tiny,xscale=0.8,yscale=0.8]  {$v$};
	\draw (380,40) node  [font=\tiny,xscale=0.8,yscale=0.8]  {$v_{0}$};
	\draw (68,56.6) node [anchor=south east] [inner sep=0.75pt]  [font=\tiny,xscale=0.8,yscale=0.8]  {$y$};
	\draw (110,26.6) node [anchor=south] [inner sep=0.75pt]  [font=\tiny,xscale=0.8,yscale=0.8]  {$b$};
	\draw (152,83.4) node [anchor=north west][inner sep=0.75pt]  [font=\tiny,xscale=0.8,yscale=0.8]  {$r$};
	\draw (122,103.4) node [anchor=north west][inner sep=0.75pt]  [font=\tiny,xscale=0.8,yscale=0.8]  {$g$};
	\draw (58,130) node [anchor=east] [inner sep=0.75pt]  [font=\tiny,xscale=0.8,yscale=0.8]  {$p$};
	\draw (380,26.6) node [anchor=south] [inner sep=0.75pt]  [font=\tiny,xscale=0.8,yscale=0.8]  {$y$};
	\draw (420,133.4) node [anchor=north] [inner sep=0.75pt]  [font=\tiny,xscale=0.8,yscale=0.8]  {$r$};
	\draw (360,132.4) node [anchor=north] [inner sep=0.75pt]  [font=\tiny,xscale=0.8,yscale=0.8]  {$p$};
	\draw (320,132.4) node [anchor=north] [inner sep=0.75pt]  [font=\tiny,xscale=0.8,yscale=0.8]  {$r$};
	\draw (328,76.6) node [anchor=south east] [inner sep=0.75pt]  [font=\tiny,xscale=0.8,yscale=0.8]  {$g$};
	\draw (432,76.6) node [anchor=south west] [inner sep=0.75pt]  [font=\tiny,xscale=0.8,yscale=0.8]  {$b$};

\end{tikzpicture}
	\caption{Tree-model property of a two-layered MPNN.}
	\label{sec:decidable;fig:idea}
\end{figure}


\paragraph{Proof of Theorem~\ref{th:nodeclassdec}.}
First, we observe the tree-model property for node-classifier MPNN over graphs of bounded degree: 
let $(N, \varphi, \psi) \in \orpn(\Phi_{\mathsf{bound}}, \Psi_{\mathsf{conv}})$ 
where $N$ has $k'$ layers and $\varphi$ bounds valid graphs to degree $d \in \sN$ and constraints nodes in the $k''$-neighbourhood of the center node. 
We have that $(N, \varphi, \psi)$ holds if and only if there is a $d$-tree $\gB$ of depth $k = \max(k',k'')$ with root $v_0$ such that $(\gB, v_0) \models \varphi$ and $N(\gB,v_0) \models \psi$. 
We prove this property in Appendix~\ref{app:dec}.

We fix the ORP $(N, \varphi, \psi)$ as specified above and assume that $\mathrm{comb}_i$ as well as the readout function $\mathit{read}$
of $N$ are given by the NN $N_1, \dotsc, N_{k'}, N_r$ where $N_1$ has input dimension $2\cdot m$ and output dimension $n$. 
Furthermore, assume that $\varphi$ bounds valid graphs to degree $d \in \sN$. For each unlabeled tree $\gB =(\sV,\sD,v_0)$  with $\sV = \{v_0, \dotsc, v_l\}$ 
of degree at most $d$ and depth $k$, of which there are only finitely many, the verification algorithm $A$ works as follows.

%
%
%
By definition, the
MPNN $N$ applied to $\gB$ computes $N_1(\vx_v,\sum_{v' \in N_v}\vx_{v'})$ as the new label for each $v \in \sV$ after 
layer $l_1$. 
However, as the structure of $\gB$ is fixed at this point we know the neighbourhood for each node $v$. 
Therefore, $A$ constructs NN $\mathcal{N}_1$ with input dimension $l \cdot m$ and output dimension $l \cdot n_1$ given by 
$(N_1(\mathrm{id}(\rvx_{v_0}), \mathrm{id}(\sum_{v' \in N_{v_0}}(\rx_{v'})), \dotsc, N_1(\mathrm{id}(\rvx_{v_l}), \mathrm{id}(\sum_{v' \in N_{v_l}}(\rx_{v'}))))$, 
representing the whole computation of layer $l_1$, where 
$\mathrm{id}(x) := \relu(\relu(x) - \relu(-x))$ is a simple gadget computing the identity.
In the same way $A$ transforms the computation of layer $l_i$, $i\geq2$, into a network $\mathcal{N}_i$ using the output of $\mathcal{N}_{i-1}$ as inputs. 
Then, $A$ combines $\mathcal{N}_l$ and $N_r$, by connecting the output dimensions of $\mathcal{N}_l$ corresponding to node $v_0$ to the input dimensions of  $N_r$, creating an NN $\mathcal{N}$ 
representing the computation of $N$ over graphs of structure $\gB$ for arbitrary labeling functions $L$. 

This construction reduces the question of whether $(N, \varphi, \psi)$ holds to the following question: are there labels for $v_0, \dotsc, v_l$, the input of $\mathcal{N}$, 
satisfying the constraints given by $\varphi$ such that the output of $\mathcal{N}$ satisfies $\psi$. As the label constraints of $\varphi$ and the specification $\psi$ are defined 
by conjunctions of linear inequalities this is in fact an instance of the output reachability problem for NN, which is known to be decidable, as shown by \cite{KatzBDJK17} or 
\cite{SalzerL21}. Therefore, $A$ incorporates a verification procedure for ORP of NN and returns $\top$ if the instance is positive and otherwise considers the next unlabeled 
$d$-tree of depth $k$. If none has been found then it returns $\bot$. The soundness and completeness of $A$ follows from the tree-model property, the exhaustive loop over all
candidate trees and use of the verification procedure for output reachability of NN.

\section{Summary and applicability of results}
\label{sec:discussion}

This work presents two major results: we proved that formal verification of ORP and ARP of graph-classifier MPNN is not possible
as soon as we consider parts of the input space, containing graphs of unbounded size, non-trivial degree and sufficiently expressive labels. 
We also showed that formal verification of ORP and ARP of node-classifier MPNN is possible, as soon as the degree of the considered input graphs is bounded. 
These results can serve as a basis for further research on formal verification of GNN but their extendability depends on several parameters. 

\paragraph{Dependency on the GNN model.} We restricted our investigations to GNN from the MPNN model, which
is a blueprint for spatial-based GNN (cf.\ \cite{WuPCLZY21}). However, the MPNN model does not directly specify how the aggregation,
combination and readout functions of GNN are represented. Motivated by common choices, we restricted our considerations to GNN 
where the aggregation functions compute a simple sum of their inputs and the combination and readout functions are represented 
by NN with ReLU activation only. Theorem~\ref{th:graphclassundec} and Corollary~\ref{cor:arpclassundec} only extend to GNN models 
that are \emph{at least} as expressive as the ones considered here. For some minor changes to our GNN setting, like considering NN with other 
piecewise-linear activation functions, it is easily argued that both results still hold. However, as soon as we leave the 
MPNN or spatial-based model the question of formal verifiability opens anew. 
Bridging results about the expressiveness of GNN from different models, for example spatial- vs.\ spectral-based, is ongoing research like done 
by \cite{balcilar2021analyzing}, and it remains to be seen which future findings on expressiveness can be used 
to directly transfer the negative results about the impossibility of formal verification obtained here. 
Analogously, Theorem~\ref{th:nodeclassdec} and Corollary~\ref{cor:arp_result} only extend to GNN that are 
\emph{at most} as expressive as the ones considered here. It is not possible, for example, to directly translate these results to 
models like DropGNN (cf.\ \cite{NEURIPS2021_Papp}), which are shown to be more expressive than MPNN. Hence, this also remains to be
investigated in the future.

\paragraph{Dependency on the specifications.} Obviously, the results presented here are highly dependent on the choice of input as well as 
output specifications. An interesting observation is that formal verification of ORP of node-classifier MPNN is impossible as soon as we allow for input specifications that can 
express properties like $\exists v \forall v'\, E(v,v')$, stating that a valid graph must contain a node that is connected to all other nodes. 
If the existence of such a ``master'' node is guaranteed for some input of a MPNN, we can use this node to check the properties of all other 
nodes. Then the same reduction idea as seen in Section~\ref{sec:undecidable_graph} can be used to show that formal verification is no longer possible. 
We refer to future work for establishing further (im-)possibility results for formal verification of ORP and ARP of GNN, with the ultimate goal of finding
tight bounds.

\bibliography{refs}
\bibliographystyle{iclr2023_conference}

\appendix


\section{Details on Important Sets of Specifications} 
\label{app:specifications}
To prove the results stated in the main part of the paper, we need to work with a formally defined syntax for each kind of specification considered here. 
However, it should be clear that the results presented in Section~\ref{sec:problems} do not depend on the exact syntactic form, 
but on the expressibility of the considered kind of specifications.

\paragraph{Vector Specifications $\Psi_{\mathsf{conv}}$. }Motivated by common choices in formal verification of classical NN, we often use the following form: 
a \emph{vector specification} $\varphi$ for a given set of variables $\sX$ is defined by the grammar
\begin{displaymath}
\varphi ::= \varphi \land \varphi \mid t \leq b \enspace, \quad t ::= c \cdot \rx \mid t + t
\end{displaymath}
where $b,c \in \mathbb{Q}$ and $\rx \in \sX$ is a variable. 
A vector specification $\varphi$ with occurring variables $\rx_0, \dotsc, \rx_{n-1}$ is satisfied by $\vx = (r_0, \dotsc, r_{n-1}) \in \sR^n$
if each inequality in $\varphi$ is satisfied in real arithmetic with each $\rx_i$ set to $r_i$. We denote the set of all such specifications by $\Psi_{\mathsf{conv}}$.
A vector specification that also includes $\lor$ and $<$ operators is called \emph{extended}. Extended vector specifications are not included in $\Psi_{\mathsf{conv}}$.

\paragraph{Graph specifications from $\phiunb$.} In the arguments of Section~\ref{sec:undecidable_graph} and Appendix~\ref{app:undec} 
we refer to any set  of graph specifications which contains a specification $\varphi$ satisfiable by graphs of arbitrary size and degree as well as arbitrary labels, as $\phiunb$,
for instance a specification like $\varphi = \mathsf{true}$. 
However, as indicated in Section~\ref{sec:problems}, this weak form of $\phiunb$ is not necessary. The arguments for Theorem~\ref{th:graphclassundec}, given in
Section~\ref{sec:undecidable_graph} and Appendix~ \ref{app:undec}, are also valid if $\phiunb$ contains at least one specification $\varphi$, satisfiable by graphs of arbitrary
size, degree 4 (indicated by Figure~\ref{sec:undecidable;fig:pcp_encoding}) and labels expressive enough to represent those used in PCP-structures (Appendix~\ref{app:undec}), 
mainly labels using positive integer values. For example, $\varphi(\gG) = \mathsf{deg}(\gG)\leq 4 \land \forall v \in \sV. \psi_{\text{PCP}}(v)$, where $\psi_{\text{PCP}}$ is an extended vector specification,
checking if the label of a node is valid for a PCP-structure. However, the exact conditions are highly dependent on the construction used in the reduction  from PCP to GLP and can easily be optimised. 
But the aim of this work is to show that there are fundamental limits in formal verification of ORP (and ARP), and optimising the undecidability results presented in Section~\ref{sec:undecidable_graph} 
in this way would only obscure the understanding of such limits.
Thus, we use the above described weaker $\phiunb$ in the formal parts, leading to uncluttered arguments and
proofs.

\paragraph{Graph-node specifications from $\Phi_{\mathsf{bound}}$.} First, we define for each $d,k \in \sN$ a set $\Phi_{\mathsf{bound}}^{d,k}$ of graph-node specifications. 
$\Phi_{\mathsf{bound}}^{d,k}$ is the set graph-node specifications $\varphi$ bounding the degree of satisfying graphs to $d$ and constraining only nodes in the $k$-neighbourhood
of the center node using vector specifications, for instance $\varphi(\gG,v) = \mathsf{deg}(\gG) \leq 4 \land \forall v' \in \nhood_k(v). \psi(v')$ where $\nhood_k$ includes all nodes of distance
up to $k$ of $v$ and $\psi$ is a vector specification. Then, $\Phi_{\mathsf{bound}} = \bigcup_{d,k \in \sN} \Phi^{d,k}_\mathsf{bound}$.

\paragraph{Graph or graph-node specifications from $\Phi_{\mathsf{neigh}}$.} We consider $\Phi_{\mathsf{neigh}}$ as a set of graph specifications or a set of graph-node specifications. 
$\Phi_{\mathsf{neigh}}$ consists of graph or graph-node specifications $\varphi$, given by some fully-defined center graph $\gG$ (or pair $(\gG,v)$) and a finite modification-budget $B$. 
A finite modification budget specifies a bounded number of structural modifications, namely inserting or deleting nodes and edges, as well as allowed label modifications of nodes in $\gG$, 
bounded by vector specifications, for instance $\varphi = (\gG, B)$ or $\varphi = ((\gG, v), B)$. 
Then, a graph or graph-node pair satisfies $\varphi$ if it can be generated from $\gG$ respectively $(\gG,v)$ respecting the bounded budget $B$.

\paragraph{Vector specifications $\Psi_{\mathsf{eq}}$.} The set $\Psi_{\mathsf{eq}}$ consists of vector specifications of the form $\rx_i = b$, thus, vector specifications
expressing that a single dimension is equal to some fixed, rational value.

\paragraph{Extended vector specifications $\Psi_{\mathsf{leq}}, \Psi_{\mathsf{class}}$.} The $\Psi_{\mathsf{leq}}$ consists of extended vector specifications of the form 
$\bigwedge_{i \in \sI}\bigvee_{j \in \sI \setminus \{i\}} \rx_i \leq \rx_j$. Analogously, $\Psi_{\mathsf{class}}$ consists of extended vector specifications of the form 
$\bigvee_{i \in \sI}\bigwedge_{j \in \sI \setminus \{i\}} \rx_i >\rx_j$. Note that for the argument of Corollary~\ref{cor:arpclassundec} it is sufficient that 
$(\rx_1 \leq \rx_2) \land (\rx_2 \leq \rx_1)$ is included in $\Psi_{\mathsf{leq}}$.

\section{Proof Details}
\label{app:proofs}

\subsection{Proving that GLP and DGLP are undecidable}
\label{app:undec}
We use the following abbreviations for GLP. For a set $\sC$ of colours we define $\mathrm{colour}(\sC) = \bigwedge_{\sC} (\rx_c=0) \lor (\rx_c=1)$
and $\mathrm{exactly\_one}(\sC) = \bigwedge_{\sC} (c \rightarrow (\bigwedge_{c' \neq c} \neg c')) \land (\neg c \rightarrow \bigvee_{c' \neq c} c')$ where
$c := (\rx_c=1)$, $\neg c := (\rx_c=0)$, $c \rightarrow \varphi := (\rx_c=0) \lor \varphi$ and $\neg c \rightarrow \varphi := (\rx_c=1) \lor \varphi$. 
We use $\rightarrow$ as having a weaker precedence than all other GLP operators.
To keep the notation clear we write -- if unambiguous -- we denote some variable $\rx_i$ in node and graph conditions by its index $i$.

Let $\gG = (\sV,\sD,L)$ be a graph. 
For some node set $\sV' \subseteq \sV$ and node $v$ we define $\nhood_v(\sV') = \nhood(v) \cap \sV'$.
We call a subset of nodes $\sV' = \{v_1, \dotsc, v_k\} \subseteq \sV, k \geq 2$, a \emph{chain} if 
$\nhood_{v_1}(\sV') = \{v_2\}$, $\nhood_{v_i}(\sV') = \{v_{i-1}, v_{i+1}\}$ for $2 \leq i \leq k-1$ and $\nhood_{v_k}(\sV')=\{v_{k-1}\}$.
We call $v_1$ \emph{start}, $v_i$ a \emph{middle} node and $v_k$ \emph{end} of $\sV'$ and assume throughout the following arguments that index $1$ denotes the 
start and the maximal index $k$ denotes the end of a chain.
Let $\sV_1 = \{v_1, \dotsc, v_k\}$ and $\sV_2 = \{u_1, \dotsc, u_k\}$ be subsets of $\sV$ and both be chains.
We say that $\sV_1 \cup \sV_2$ is a \emph{ladder} if for all $v_i, u_i$ we have $\nhood_{v_i}(\sV_1 \cup \sV_2) = \nhood_{v_i}(\sV_1) \cup \{u_i\}$ 
and $\nhood_{u_i}(\sV_1 \cup \sV_2) = \nhood_{u_i}(\sV_2) \cup \{v_i\}$.

First, we show that DGLP can recognise graphs $\gG$ that consist of exactly one ladder and one additional chain. 
If this is the case we call $\gG$ a \emph{chain-ladder}.
Let $\sC_3 = \{c_1, c_2, c_3\}$, $\sT = \{(c,s), (c,m), (c,e) \mid c \in \sC_3\}$ 
be sets of of symbols we call \emph{colours}.
Let $(\varphi_{\text{CL}}, \psi_\text{CL})$ be the following DGLP over variables 
$\mathrm{Var}_\text{CL}=\{\rx_c \mid c \in \sC_3 \cup \sT \} \cup \{\rx_{c,\text{id}}, \rx_{c,e,\text{id}} \mid c \in \sC_3\}$:
\begin{align*}
	\varphi_{\text{CL}} :=& \varphi_\text{cond} \land \mathrm{colour}(\sC_3 \cup \sT )  \land \AAnd_{\sM \in \{\sC_3,\sT \}}\mathrm{exactly\_one}(\sM)\\
	\varphi_\text{cond} :=& \AAnd_{\sC_3} (\neg c_i \rightarrow \AAnd_{\sT } \neg (c_i,t) \land (c_i,\text{id})=0 \land (c_i,e,\text{id})=0) 
							\land (c_i \rightarrow \odot c_i = 1 \lor \odot c_i = 2) \\ 
						& \hphantom{\AAnd_{\sC_3} } \land ((c_i,s) \rightarrow \odot c_i = 1 \land (c_i,\text{id})=1 \land \odot(c_i,\text{id}) = 2 \land (c_i,e,\text{id}=0)) \\ 
						& \hphantom{\AAnd_{\sC_3} } \land ((c_i,m) \rightarrow \odot c_i = 2 \land 2(c_i,\text{id})=\odot (c_i, \text{id}) \land (c_i,e,\text{id}=0)) \\
						& \hphantom{\AAnd_{\sC_3} }\land ((c_i,e) \rightarrow \odot c_i = 1 \land (c_i,\text{id}) \leq \odot (c_i,\text{id}) - 1 \land (c_i,e,\text{id})=(c_i,\text{id})) \\
						& \land (c_1 \rightarrow \odot c_2 = 1 \land (c_1, \text{id})=\odot(c_2,\text{id})) \land (c_2 \rightarrow \odot c_1 = 1 \land (c_2, \text{id})=\odot(c_1,\text{id})) \\
	\psi_{\text{CL}} :=& \AAnd_{\sC_3} (c_i,s)=1 \land (c_i,e)=1 \land c_i=(c_i,e,\text{id}) 
\end{align*}

\begin{lemma}
	If $\gG=(\sV,\sD,L)$ satisfies $(\varphi_\text{CL}, \psi_\text{CL})$ then $\gG$ is a chain-ladder and if $\gG'=(\sV',\sD')$ is an unlabeled chain-ladder then
	there is $L'$ such that $\gG=(\sV',\sD',L')$ satisfies $(\varphi_\text{CL}, \psi_\text{CL})$.
	\label{lem:chain_ladder}
\end{lemma}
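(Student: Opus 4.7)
The plan is to prove the two directions of the biconditional separately, and in both cases to exploit the way in which the colour, position, and id variables are forced to align with the combinatorial structure of a chain-ladder.

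For the forward (soundness) direction, I would first use $\mathrm{colour}(\sC_3 \cup \sT)$ to restrict every colour coordinate to $\{0,1\}$, and then apply the $\mathrm{exactly\_one}$ constraints to conclude that each node carries exactly one main colour $c_i \in \sC_3$ and exactly one positional tag $t \in \sT$. The implications $\neg c_i \rightarrow \bigwedge_t \neg (c_i,t) \land (c_i,\text{id})=0 \land (c_i,e,\text{id})=0$ confine the positional tag and the id variables to the matching colour. The neighbourhood conditions for $s$, $m$, $e$ then force each colour class to induce a graph in which starts and ends have a unique same-colour neighbour while middles have exactly two, which is the local signature of a simple path. I would next argue, by induction along the path starting at its start, that the $(c_i,\text{id})$ values form a strict arithmetic progression $1, 2, 3, \dots$: the start condition pins the first two ids, and the middle equation $2(c_i,\text{id}) = \odot(c_i,\text{id})$ propagates the progression. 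The graph condition $c_i = (c_i,e,\text{id})$, combined with $(c_i,e,\text{id})=(c_i,\text{id})$ at the end, then fixes the length of the $c_i$-chain as the end's id. Finally, the cross-colour constraints $c_1 \rightarrow \odot c_2 = 1 \land (c_1,\text{id})=\odot(c_2,\text{id})$ and the symmetric partner for $c_2$ pair each $c_1$-node bijectively with a $c_2$-node of the same id, producing a ladder between those two chains, while the $c_3$-chain is unaffected by any cross-colour constraints, yielding a standalone chain. Together these observations exhibit the graph as a chain-ladder.

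For the converse, given an unlabeled chain-ladder $\gG' = (\sV', \sD')$ I would exhibit a concrete labeling $L'$ as follows: pick the two chains that form the ladder, colour them $c_1$ and $c_2$ respectively so that rung-matched nodes occupy the same position in each chain; colour the remaining chain $c_3$. For each chain choose one endpoint as start and the other as end, marking interior nodes as middles; assign $(c_i,\text{id})$ the values $1, 2, \dots, k$ from start to end; set $(c_i,e,\text{id})$ equal to the id at the end and to $0$ everywhere else; and zero out any variable indexed by a colour different from the node's own. A conjunct-by-conjunct verification then confirms that every part of $\varphi_{\text{CL}}$ and every summand of $\psi_{\text{CL}}$ is satisfied by this labeling.

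The main obstacle is the id-propagation argument in the soundness direction: one must show that the middle equation together with the start and end boundary constraints rules out degenerate configurations in which the induced subgraph on a colour class is disconnected, branched, or cyclic. Here the end inequality $(c_i,\text{id}) \leq \odot(c_i,\text{id})-1$ plays a delicate role, combined with the exact pin of the start, in excluding these pathologies before the arithmetic recurrence is applied. Once the path structure on each colour class is secured, the coupling between the $c_1$- and $c_2$-chains via matching ids forces the ladder, and the remaining $c_3$-chain completes the chain-ladder shape.
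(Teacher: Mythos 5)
Your overall route coincides with the paper's: decode the $\mathrm{colour}$/$\mathrm{exactly\_one}$ constraints, read off the local degree signature of start/middle/end nodes, propagate the ids along each colour class, use the cross-colour id matching for the ladder, and prove the converse by an explicit labeling. The converse direction and the ladder step are fine as sketched.

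However, there is a genuine gap exactly at the point you flag as the ``main obstacle,'' and the tool you propose there does not close it. A colour class $\sV_i$ in which starts/ends have one same-colour neighbour and middles have two is \emph{not} locally the signature of a single simple path: it is the signature of a disjoint union of one or more paths together with arbitrarily many cycles consisting entirely of $(c_i,m)$-nodes. Such a cycle satisfies every node condition: the middle equation $2(c_i,\text{id})=\odot(c_i,\text{id})$ only forces the ids around the cycle to form an arithmetic progression, which is consistent (take all ids equal), and the start pin $(c_i,\text{id})=1$, $\odot(c_i,\text{id})=2$ and the end inequality $(c_i,\text{id})\leq\odot(c_i,\text{id})-1$ say nothing about a component that contains no start and no end. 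So these boundary constraints cannot ``exclude these pathologies before the arithmetic recurrence is applied.'' What actually excludes them is global counting in the graph condition $\psi_{\text{CL}}$, which you never invoke in full: the conjuncts $(c_i,s)=1\land(c_i,e)=1$ force exactly one start and one end in the whole graph (ruling out several path components, since every degree-one node of the class must be a start or an end), and the conjunct $c_i=(c_i,e,\text{id})$ forces the \emph{total number} of $c_i$-coloured nodes to equal the id of the unique end. Since the id-propagation argument shows that the walk from the start has strictly increasing ids and therefore reaches the end after exactly $\text{id}(v_e)$ nodes, this count leaves no room for any additional component; only then is $\sV_i$ a single chain. In your write-up the conjunct $c_i=(c_i,e,\text{id})$ is used merely to ``fix the length of the $c_i$-chain,'' i.e.\ after the chain shape is already assumed, so the exclusion of disconnected or cyclic configurations is never actually established. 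Reordering the argument as in the paper---unique start/end from $\psi_{\text{CL}}$, increasing-id walk from start to end, then the cardinality constraint $c_i=(c_i,e,\text{id})$ to rule out extra nodes---repairs the proof.
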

\begin{proof}
	Assume that $\gG$ satisfies $(\varphi_{\text{CL}},\psi_{\text{CL}})$. By definition, it follows that all nodes $v \in \sV$ satisfy $\varphi_{\text{CL}}$ and $\gG$ satisfies $\psi_{\text{CL}}$.
	
	Let $v \in \sV$ be a node. Due to $\AAnd_{\sM \in \{\sC_3,\sT\}}\mathrm{exactly\_one}(\sM) \land \mathrm{colour}(\sC_3 \cup \sT)$ we have that $v$ has exactly one colour $c_1$,
	$c_2$ or $c_3$ and exactly one from $\sT$. Furthermore, the subformula $\AAnd_{\sC_3} (\neg c_i \rightarrow \AAnd_{\sT} \neg (c_i,t) \land \dotsb$ implies that there is $i \in \{1,2,3\}$
	such that $v$ is of colour $c_i$ and $(c_i,t)$ for some $t$. 
	
	We divide $\sV$ into three sets $\sV_1$, $\sV_2$ and $\sV_3$ such that $v \in \sV_i$ if and only if $v$ is of colour $c_i$ and argue that each $\sV_i$ is a chain. 
	Note that the $\sV_i$ are disjunct sets. Let $v \in \sV_i$. The subformula $(c_i \rightarrow \odot c_i = 1 \lor \odot c_i = 2)$ implies that $v$ has 1 or two neighbours from 
	$\sV_i$. From the argument above, we know that $v$ must be of exactly one colour $(c_i,s)$, $(c_i,m)$ or $(c_i,e)$. The $\rightarrow$ subformulas in $\varphi_\text{cond}$ 
	regarding these three colours imply: if $v$ is of colour $(c_i,s)$ or $(c_i,e)$ it must have exactly one neighbour from $\sV_i$ and if $v$ is of colour $(c_i,m)$ it
	must have exactly two neigbours from $\sV_i$. The graph condition $\psi_{\text{CL}}$ implies that there is exactly one node with colour $(c_i,s)$ and one with colour $(c_i,e)$.
	In combination, we have that there is a start $v_s$ and end $v_e$ in $\sV_i$ both having one neighbour in $\sV_i$ and all middle nodes $v_m$ having two. 
	
	Next, consider the $(c_i,\text{id})$ and $(c_i,e,\text{id})$ label dimensions. We call $(c_i,\text{id})$ the \emph{id} of a node with colour $c_i$. 
	The subformula $(\neg c_i \rightarrow \dotsb \land (c_i,\text{id})=0 \land (c_i,e,\text{id})=0) \land \dotsb)$ implies that if a node is not of colour $c_i$ then the corresponding dimensions must be 0 
	and $((c_i,s) \rightarrow \dotsb \land (c_i,e,\text{id}=0))$ and $((c_i,m) \rightarrow \dotsb \land (c_i,e,\text{id}=0))$ imply that if it is not an end node then $(c_i,e,\text{id})$ is 0 as well. 
	Next, we see in the subformula $((c_i,s) \rightarrow \dotsb \land (c_i,\text{id})=1 \land \odot(c_i,\text{id}) = 2 \land \dotsb)$ that $v_s$ has id 1 and its neighbour has id 2. This implies that the only
	neighbour of $v_s$ is not itself. The same holds for $v_e$ due to the subformula $((c_i,e) \rightarrow \dotsb \land (c_i,\text{id}) \leq \odot (c_i,\text{id}) - 1)$.
	Furthermore, the subformula $((c_i,e) \rightarrow \dotsb \land (c_i,e,\text{id})=(c_i,\text{id}))$ implies that the id of $v_e$ is stored in $(c_i,e,\text{id})$.
	This is used in the graph condition subformula $c_i=(c_i,e,\text{id})$ to ensure that the amount of nodes in $\sV_i$ is equal to the id of $v_e$.
	We make a case distinction: if $v_e$ is the neighbour of $v_s$ then the id of $v_e$ is 2 and, thus, $\sV_1 = \{v_s,v_e\}$ 
	which obviously is a chain. If $v_e$ is not the neighbour of $v_s$ then it must be some $v_m$. The subformula $((c_i,m) \rightarrow \dotsb \land 2(c_i,\text{id})=\odot (c_i, \text{id}) \land \dotsb)$ implies that $v_m$ is not
	its own neighbour and that the other neighbour $v'_{m}$ must have id 3. Now, if $v'_m = v_e$ then we can make the same argument as in the other case. If not then  we get that 
	$v'_m$ must have a neighbour $v''_m \neq v'_m$. The node $v''_m$ must have id 4 and, thus, it did not occur earlier on the chain. 
	As $\sV_i$ is finite, this sequence must eventually reach $v_e$ and we get that $\sV_i$ must be a chain.
	
	So far, we argued that $\sV = \sV_1 \cup \sV_2 \cup \sV_3$ with $\sV_i$ disjunct and chains. It is left to argue that $\sV_1 \cup \sV_2$ forms a ladder. From our previous arguments we
	know that the nodes of $\sV_i$ have incrementing ids from $v_s$ to $v_e$ starting with 1. Therefore, the ladder property is ensured by the subformulas 
	$(c_1 \rightarrow \odot c_2 = 1 \land (c_1, \text{id})=\odot(c_2,\text{id}))$ and $(c_2 \rightarrow \odot c_1 = 1 \land (c_2, \text{id})=\odot(c_1,\text{id}))$.
	
	The other statement of the lemma, namely that there is a labeling function $L'$ for $\gG'$ such that $(\varphi_\text{CL}, \psi_\text{CL})$ is satisfied, is a straightforward
	construction of $L'$ following the arguments above.
\end{proof}

Let $\gG$ be a chain-ladder with ladder $\sV_1 \cup \sV_2 = \{v_1, \dotsc, v_k\} \cup \{u_1, \dotsc, u_k\}$ and chain $\sV_3 = \{w_1, \dotsc, w_l\}$. We call $\gG$ a \emph{PCP-structure}
if for all $w_i$ we have $\nhood_{w_i}(\sV_1 \cup \sV_2 \cup \sV_3) = \nhood_{w_i}(\sV_3) \cup \{v_{h_i},u_{j_i}\}$ for some $h_i,j_i \in \{1, \dotsc, k\}$ such that for all $2 \leq i \leq l-1$ we have that $h_{i-1} < h_i < h_{i+1}$ and $j_{i-1} < j_i < j_{i+1}$. Intuitively, this property ensures that connections from chain $\sV_3$ to $\sV_1$ or $\sV_2$ do not intersect.

We show that there is a DGLP that recognizes PCP-structures. Let $L, M, R$ be colours, called \emph{directions}. 
We define $L+1:=M, M+1:=R, R+1:=L$ and $\mathit{direction}-1$ symmetrically. 
Let $\sC_3$ be as above and $\sF =\{(c,d) \mid c \in \sC_3, d \in \{L,M,R\}\}$. 
The DGLP $(\varphi_\text{PS}, \psi_\text{PS})$ over the variables 
$\mathrm{Var}_\text{PS} = \mathrm{Var}_\text{CL} \cup \{\rx_c \mid c \in \sF\} \cup \{\rx_{d,c_i,\text{id}} \mid d \in \{L,M,R\}, c_i \in \{c_1,c_2\}\}$ is defined as follows:
\begin{align*}
	\varphi_\text{PS} :=& \varphi_\text{cond} \land \mathrm{colour}(\sF) \land \mathrm{exactly\_one}(\sF) \land \varphi_{\text{CL}}  \\
	\varphi_\text{cond} :=& (\AAnd_{\sC_3} \neg c_i \rightarrow \AAnd_{\sF} \neg (c_i,d)) \\
	& \land \AAnd_{\sC_3} ((c_i,s) \rightarrow (c_i,L) \land \odot (c_i,M)=1) \\
	& \hphantom{\land \AAnd_{\sC_3}}\land ((c_i,m) \rightarrow \OOr_{\sF}(c_i,d) \land \AAnd_{d' \neq d} \odot(c_i,d')=1) \\
	& \land (\AAnd_{\{c_1,c_2\}} c_i \rightarrow \odot c_3 \leq 1) \land (c_3 \rightarrow \odot c_1 = 1 \land \odot c_2 = 1) \\ 
	& \land \AAnd_\sF (\neg (c_3,d) \rightarrow \AAnd_{\{c_1,c_2\}} (d,c_i,\text{id})=0) \land ((c_3,d) \rightarrow \AAnd_{\{c_1,c_2\}} \odot (c_i,\text{id}) = (d,c_i,\text{id})) \\
	& \land \AAnd_\sF ((c_3,d) \rightarrow \AAnd_{\{c_1,c_2\}} \odot (d-1,c_i,\text{id}) \leq (d,c_i,\text{id}) \land (d,c_i,\text{id}) \leq \odot(d+1,c_i,\text{id}))\\
	\psi_\text{PS} :=& \psi_\text{CL}
\end{align*}

\begin{lemma}
	If $\gG = (\sV,\sD,L)$ satisfies $(\varphi_\text{PS}, \psi_\text{PS})$ then $\gG$ is a PCP-structure and if
	$\gG'=(\sV',\sD')$ is an unlabelled PCP-structure then there is labelling function $L'$ such that $(\sV',\sD',L')$ satisfies $(\varphi_\text{PS}, \psi_\text{PS})$.
	\label{lem:pcp_struc}
\end{lemma}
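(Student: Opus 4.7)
My plan is to reduce to Lemma~\ref{lem:chain_ladder} and then handle the cross-edges separately. Since $\varphi_\text{PS}$ contains $\varphi_\text{CL}$ as a conjunct and $\psi_\text{PS}=\psi_\text{CL}$, any graph $\gG$ satisfying $(\varphi_\text{PS},\psi_\text{PS})$ is already a chain-ladder; I would fix the decomposition $\sV_1\cup\sV_2=\{v_1,\dotsc,v_k\}\cup\{u_1,\dotsc,u_k\}$ and $\sV_3=\{w_1,\dotsc,w_l\}$ with the ids from the earlier lemma. What remains is to show that the edges between $\sV_3$ and $\sV_1\cup\sV_2$ form a monotone non-crossing matching. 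First I would read off the degree information directly from $\varphi_\text{cond}$: the clauses $c_i\rightarrow\odot c_3\leq 1$ for $i\in\{1,2\}$ together with $c_3\rightarrow\odot c_1=1\land\odot c_2=1$ give that each $w_i$ has a unique $c_1$-neighbour $v_{h_i}$ and a unique $c_2$-neighbour $u_{j_i}$, and that the maps $i\mapsto h_i$ and $i\mapsto j_i$ are injective.

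Next I would establish that the direction labels cycle along $\sV_3$. Using $\mathrm{exactly\_one}(\sF)$ together with $\neg c_i\rightarrow\AAnd_\sF\neg(c_i,d)$, exactly the $c_3$-nodes carry a direction from $\{L,M,R\}$, and each does so uniquely. The start clause $(c_3,s)\rightarrow(c_3,L)\land\odot(c_3,M)=1$ pins $w_1$ at direction $L$ and its chain-successor at $M$, while the middle clause forces each interior $w_i$'s two chain-neighbours to carry the two directions distinct from its own. A short induction on position in $\sV_3$ shows that $w_i$ has direction $d_i$ with $d_{i+1}=d_i+1$, so the directions cycle $L,M,R,L,M,R,\dotsc$

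The crux of the argument is extracting monotonicity from the id-storage clauses. The pair $\neg(c_3,d)\rightarrow(d,c_i,\text{id})=0$ and $(c_3,d)\rightarrow\odot(c_i,\text{id})=(d,c_i,\text{id})$ arrange that $(d,c_i,\text{id})$ is nonzero only on $c_3$-nodes of direction $d$, and on such a node $w$ its value equals the id of $w$'s unique $c_i$-neighbour (the other neighbours contribute $0$ to $\odot(c_i,\text{id})$, since $(c_i,\text{id})$ vanishes off colour $c_i$). Reading the clause $(c_3,d)\rightarrow\AAnd_{\{c_1,c_2\}}\odot(d-1,c_i,\text{id})\leq(d,c_i,\text{id})\land(d,c_i,\text{id})\leq\odot(d+1,c_i,\text{id})$ at $w_i$ of direction $d_i$, the two summations collapse to the stored values at the unique chain-neighbours $w_{i-1}$ and $w_{i+1}$, yielding $h_{i-1}\leq h_i\leq h_{i+1}$ and $j_{i-1}\leq j_i\leq j_{i+1}$. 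Combined with the injectivity from the first step these inequalities are strict, which is exactly the PCP-structure condition. The converse is a direct labelling: take the chain-ladder witness produced by Lemma~\ref{lem:chain_ladder}, cycle $L,M,R$ along $\sV_3$ starting at $L$, and set $(d,c_i,\text{id})$ on the $c_3$-node of direction $d$ equal to the corresponding $h_i$ or $j_i$; all clauses of $\varphi_\text{PS}$ then verify by direct inspection.

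The main obstacle I anticipate is the bookkeeping around the cancellation in $\odot(d\pm 1,c_i,\text{id})$: one must check carefully that only the intended chain-neighbour contributes to these neighbourhood sums. This relies on both the zeroing convention $\neg(c_3,d)\rightarrow(d,c_i,\text{id})=0$ for non-$c_3$-nodes and on the direction-cycling step, since the $c_i$-neighbours of $w_i$ lie outside $\sV_3$ and so cannot carry any $(c_3,d')$ label. Once this is nailed down, the rest is a routine propagation of the node- and graph-condition semantics along the already-established chain-ladder structure.
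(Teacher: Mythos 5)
Your forward direction is essentially the paper's own argument: invoke Lemma~\ref{lem:chain_ladder} via the conjuncts $\varphi_{\text{CL}}$ and $\psi_{\text{CL}}$, derive the $(L,M,R)^{*}$ cycling along $\sV_3$ from the start and middle clauses, obtain unique cross-neighbours and injectivity of $i\mapsto h_i$, $i\mapsto j_i$ from $c_i\rightarrow\odot c_3\leq 1$ and $c_3\rightarrow\odot c_1=1\land\odot c_2=1$, and collapse the neighbourhood sums in the storage clause $\odot(c_i,\text{id})=(d,c_i,\text{id})$ and in the monotonicity clause to the unique relevant neighbour, using the zeroing clause $\neg(c_3,d)\rightarrow(d,c_i,\text{id})=0$; the resulting $h_{i-1}\leq h_i\leq h_{i+1}$ becomes strict by injectivity, which is exactly how the paper justifies that $\leq$ suffices. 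The paper additionally establishes the direction cycling on $\sV_1$ and $\sV_2$ (it is needed later for the PCP reduction), and you correctly observe that it is not needed for the non-crossing property itself.

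One claim is wrong, though, and it propagates into your converse. $\mathrm{exactly\_one}(\sF)$ together with $\AAnd_{\sC_3}(\neg c_i\rightarrow\AAnd_{\sF}\neg(c_i,d))$ does not say that exactly the $c_3$-nodes carry a direction: it forces \emph{every} node to carry exactly one colour from $\sF$, whose chain tag must match the node's chain colour, so ladder nodes carry $(c_1,d)$ resp.\ $(c_2,d)$ colours as well. What your cancellation argument actually needs --- each $c_3$-node has exactly one $(c_3,d)$ and no ladder node has any $(c_3,d)$ --- is true, so the first direction stands. But the labelling you propose for the converse, cycling $L,M,R$ only along $\sV_3$, violates $\mathrm{exactly\_one}(\sF)$ on every ladder node as well as the clauses $(c_i,s)\rightarrow(c_i,L)\land\odot(c_i,M)=1$ and the corresponding middle-node clauses for $i\in\{1,2\}$, which are conjuncted over all of $\sC_3$. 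The repair is routine: extend the chain-ladder labelling from Lemma~\ref{lem:chain_ladder} by cycling $L,M,R$ from start to end along each of $\sV_1$, $\sV_2$, $\sV_3$, set $(d,c_i,\text{id})$ on a $c_3$-node of direction $d$ to the id of its unique $c_i$-neighbour and to $0$ on all other nodes and dimensions, and verify the clauses; with that amendment your proof is complete and coincides with the paper's.
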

\begin{proof}
	Assume that $\gG$ satisfies $(\varphi_\text{PS}, \psi_\text{PS})$. As $\varphi_{\text{CL}}$ occurs as a conjunct in $\varphi_{\text{PS}}$ and $\psi_{\text{CL}}$
	in $\psi_{\text{PS}}$ Lemma~\ref{lem:chain_ladder} implies that $\gG$ is a chain-ladder. Let $\sV_1 \cup \sV_2$ be the ladder and $\sV_3$ the chain.
	
	The subformula $\mathrm{exactly\_one}(\sF)$ and $\mathrm{colour}(\sF)$ in combination with $(\AAnd_{\sC_3} \neg c_i \rightarrow \AAnd_{\sF} \neg (c_i,d))$ imply that
	a node is of colour $c_i$ if and only if it is of exactly one color $(c_i,d)$. From the arguments of Lemma~\ref{lem:chain_ladder} we know that each
	node $v$ has exactly one colour $c_i$ and, thus, $v$ also has a corresponding direction $d \in \{L,M,R\}$. The subformulas
	$((c_i,s) \rightarrow (c_i,L) \land \odot (c_i,M)=1)$ and $((c_i,m) \rightarrow \OOr_{\sF}(c_i,d) \land \AAnd_{d' \neq d} \odot(c_i,d')=1)$ imply
	that start node of chain $\sV_i$ has direction $L$ and its neighbour $M$ and that the neighbours of each middle node of direction $d$, characterised by colour $(c_i,m)$, must have 
	directions $d-1$ and $d+1$. In combination, this implies that each chain $\sV_1$, $\sV_2$ and $\sV_3$ is coloured from start to end with the pattern $(L, M, R)^*$. 
	
	The subformulas $(\AAnd_{\{c_1,c_2\}} c_i \rightarrow \odot c_3 \leq 1)$ and $(c_3 \rightarrow \odot c_1 = 1 \land \odot c_2 = 1)$ imply that nodes from ladder $\sV_1 \cup \sV_2$ have
	at most one neighbour from $\sV_3$ and each node from chain $\sV_3$ has exactly one neighbour from $\sV_1$ and one from $\sV_2$.
	Consider the dimensions $(d, c_i, \text{id})$. First, the subformula $(\neg (c_3,d) \rightarrow \AAnd_{\{c_1,c_2\}} (d,c_i,\text{id})=0)$ 
	and the conditions of $\varphi_{\text{CL}}$ imply that dimension $(d,c_i,\text{id})$ of node $v$ are nonzero only if $v$ is from $\sV_3$ and of direction $d$. 
	The subformula $((c_3,d) \rightarrow \AAnd_{\{c_1,c_2\}} \odot (c_i,\text{id}) = (d,c_i,\text{id}))$ leads to the case that each node $v \in \sV_3$ of direction $d$
	has stored the id of its one neighbour from $\sV_1$ in $(c_1,d,\text{id})$ and the id of its one neighbour from $\sV_2$ in $(c_2,d,\text{id})$. Now, the subformulas
	$((c_3,d) \rightarrow \AAnd_{\{c_1,c_2\}} \odot (d-1,c_i,\text{id}) \leq (d,c_i,\text{id}))$ and $((d,c_i,\text{id}) \leq \odot(d+1,c_i,\text{id}))$ imply the main property
	of a PCP-structure, namely that the connections between $\sV_3$ and $\sV_1$ as well as $\sV_2$ are not intersecting. Note that $\leq$ is sufficient as each node from $\sV_1$ and
	$\sV_2$ can have at most 1 neighbour from $\sV_3$.
\end{proof}

Finally, we are set to prove that DGLP is undecidable.
Let $P = \{(\alpha_1, \beta_1), \dotsc, (\alpha_k,\beta_k)\}$ be a PCP instance over alphabet $\Sigma = \{a,b\}$ 
and let $\tilde{m} = \max(\bigcup_{i=1}^k \{|\alpha_i|, |\beta_i|\})$. 
Let $\sB=\{(c_i,d,a,j), (c_i,d,b,j) \mid c_i \in \{c_1,c_2\}, d \in \{L,M,R\}, j \in \{0, \dotsc, \tilde{m}-1\}\}$ and
$\sS = \{(c_i,d,p,j) \mid c_i \in \{c_1,c_2\}, d \in \{L,M,R\}, p \in \{1, \dotsc, k, e, \bot\}, j \in \{0, \dotsc, \tilde{m}\}\}$ colours. 
Additionally, we define $\sS^{\top} = \sS \setminus \{(c_i,d,\bot,j) \mid (c_i,d,\bot,j) \in \sS\}$, $\sS^0 = \{(c_i,d,p,0) \mid (c_i,d,p,0) \in \sS\}$ 
and $\sB^0 = \{(c_i,d,p,0) \mid (c_i,d,p,0) \in \sB\}$.
We define the following DGLP $(\varphi_P, \psi_P)$ over the variables $\mathrm{Var}_\text{PS} \cup \{\rx_c \mid c \in \sB \cup \sS\}$:
\begin{align*}
	\varphi_P :=& \varphi_\text{cond} \land (\AAnd_{\{c_1,c_2\}} c_i \rightarrow \AAnd_{\sM \in \{\sB^0,\sS^0\}}\mathrm{exactly\_one}(\sM)) \land \varphi_\text{PS} \land \mathrm{colour}(\sB \cup \sS) \\
	\varphi_\text{cond} :=& \land \AAnd_\sF (\neg (c_i, d) \rightarrow \AAnd_{\sB \cup \sS} \neg (c_i,d,p,j)) \\
	& \land \AAnd_\sF (c_i,d) \rightarrow \AAnd_{\sB, j < \tilde{m}-1} (c_i,d,p,j+1) = \odot (c_i,d+1,p,j)) \\
	& \land \AAnd_\sF (c_i,d) \rightarrow  \AAnd_{\sS, j < \tilde{m}} (c_i,d,p,j+1) = \odot (c_i,d+1,p,j)) \\
	& \land \AAnd_{\{c_1,c_2\}}  (c_i,e) \rightarrow \AAnd_{\sB \cup \sS, p \neq e} \neg(c_i,d,p,j) \land \OOr_\sF (c_i,d,e,0)\\
	& \land \AAnd_{\{c_1,c_2\}}\neg (c_i,e) \rightarrow \AAnd_\sF \neg (c_i,d,e, 0) \\
	& \land \AAnd_{\sS^\top,p \neq e} (c_1,d,p,0) \rightarrow \odot c_3 = 1 \land (\AAnd_{j=0}^{|\alpha_p|-1} (c_1,d,\alpha_p[j],j) \land (c_1,d,\bot,j))  \\
	& \hphantom{\land \AAnd_{\sS^\top,i \neq e} (c_1,d,i,0) \rightarrow}  \land \OOr_{p'=1}^k (c_1,d,p',|\alpha_p|) \lor (c_1,d,e,|\alpha_p|) \\
	& \land \AAnd_{\sS^\top,p \neq e}(c_2,d,p,0) \rightarrow \odot c_3 = 1 \land (\AAnd_{j=0}^{|\beta_p|-1} (c_2,d,\beta_p[j],j) \land (c_2,d,\bot,j)) \\
	& \hphantom{\land \AAnd_{\sS^\top,i \neq e} (c_2,d,i,0) \rightarrow} \land \OOr_{p'=1}^k (c_2,d,p',|\beta_p|) \lor (c_2,d,e,|\beta_p|) \\
	& \land \AAnd_{\{c_1,c_2\}} (c_i,s) \rightarrow \OOr_{\sS^\top} (c_i,d,p,0) \\
	& \land \AAnd_{\{L,M,R\}} ((c_1,d) \rightarrow (c_1,d,a,0)=\odot(c_2,d,a,0) \land (c_1,d,b,0)=\odot (c_2,d,b,0)) \\
	& \land c_3 \rightarrow (\AAnd_{\sS^0} \AAnd_{d' \in \{L,M,R\}} \odot (c_1,d,p,0) = \odot (c_2,d',p,0)) \land \OOr_{\sS^\top} \odot (c_1,d,p,0)=1 \\
	\psi_P :=& \psi_\text{PS}
\end{align*}

\paragraph{Proof of Theorem~\ref{thm:glpundec}.}
	We prove this via reduction from PCP. Let $P = \{(\alpha_1,\beta_1), \dotsc, (\alpha_k, \beta_k)\}$ and $(\varphi_P,\psi_P)$ be like above.
	Assume that $(\varphi_P,\psi_P)$ is satisfied by $\gG$.
	
	From the describtion above, we can see that $\varphi_{\text{PS}}$ and $\psi_{\text{PS}}$ are conjunctive subformulas of $\varphi_P$ respectively $\psi_P$.
	Therefore, Lemma~\ref{lem:pcp_struc} implies that $\gG$ is a PCP-structure. Let $\sV_1 \cup \sV_2$ be the ladder and $\sV_3$ the additional chain in
	$\gG$. In addition to the colours resulting from $\varphi_{\text{PS}}$, the subformulas $\mathrm{colour}(\sB \cup \sS)$ and 
	$(\AAnd_{\{c_1,c_2\}} c_i \rightarrow \AAnd_{\sM \in \{\sB^0,\sS^0\}}\mathrm{exactly\_one}(\sM))$ ensure that $\sB$ and $\sS$ are colours and that each ladder node
	has exactly one colour from $\sB^0 \subset \sB$ and $\sS^0 \subset \sS$. The idea of these colours is the following: a colour $(c_i,d,p,j) \in \sB$ with 
	$p \in \{a,b\}$ and $j \in \{0, \dotsc, \tilde{m}-1\}$ represents the symbol ($a$ or $b$) of a node in distance $j$ of a node coloured with $(c_i,d)$.
	Similarly, colour $(c_i,d,p,j)$ with $p \in \{1, \dotsc, k , \bot, e\}$ and $j \in \{0, \dotsc, \tilde{m}\}$ represents that a node in distance $j$ of a
	node coloured with $(c_i,d)$ is the start of tilepart $\alpha_p$ if $i=1, p \neq \bot,e$ and $\beta_p$ if $i=2, p \neq \bot, e$. In case of $p=e$ the
	node in distance $j$ is the end node of chain $\sV_i$ and $p=\bot$ is a placeholder for nodes which are neither a start of some tilepart nor the end node. 
	The case $j=0$ is interpreted as its own symbol or start of a tile part.
	 
	We argue how $\varphi_P$ ensures the above mentioned properties of colours $(c_i,d,p,j) \in \sB \cup \sS$. The subformula 
	$(\neg (c_i, d) \rightarrow \AAnd_{\sB \cup \sS} \neg (c_i,d,p,j))$ ensures that a node of some colour $(c_i,d,p,j)$ must also be of colour $(c_i,d)$. Especially,
	this implies that nodes from chain $\sV_3$ do not have any colour $(c_i,d,p,j)$. The subformulas $((c_i,d) \rightarrow \AAnd_{\sB, j < \tilde{m}-1} (c_i,d,p,j+1) 
	= \odot (c_i,d+1,p,j))$ and $((c_i,d) \rightarrow  \AAnd_{\sS, j < \tilde{m}} (c_i,d,p,j+1) = \odot (c_i,d+1,p,j))$ ensure that a node with colour $(c_i,d)$ stores the
	information $p,j$ of its $(c_i,d+1)$ neighbour in form of its own colour $(c_i,d,p,j+1)$. Note that, each chain is labeled wird $L,M,R,L,\dotsc$ from start
	to end and, thus, the $d+1$ neighbour is the \emph{right} neighbour in the sense that its nearer to the end node $v_e$.
	To understand how this leads to the case that each node on chain $\sV_i$ stores the information of its $\tilde{m}$ right neighbours, we argue beginning from end $v_e$ of chain
	$\sV_i$. Subformula $((c_i,e) \rightarrow \AAnd_{\sB \cup \sS, p \neq e} \neg(c_i,d,p,j) \land \OOr_\sF (c_i,d,e,0))$ ensures that $v_e$ only has colour $(c_i,d,e,0)$. That $d$ matches
	its colour $(c_i,d)$ is ensured by $\neg(c_i,d) \rightarrow \dotsb$. Therefore, its only and left neighbour $v$ must have colour $(c_i,d-1,e,1)$ plus its own additional colours with $j=0$. 
	Now, the left neighbour $v'$ of $v$ must have colours $(c_1,d-2,e,2)$, the colours equivalent to $v$ with $j=1$ and its own colours with $j=0$ and so on. As the maximum $j$ in case of a colour
	from $\sS$ is $\tilde{m}$, tilepart start, end or $\bot$ colours are stored in nodes up to distance $\tilde{m}$ to the left of the original node. The same holds for colours from $\sB$ with distance
	$\tilde{m}-1$.
	
	We are set to argue that $\gG$ encodes a solution $I$ of $P$. The subformula $(\AAnd_{\sS^\top,p \neq e} (c_1,d,p,0) \rightarrow \dotsb \land (\AAnd_{j=0}^{|\alpha_p|-1} (c_1,d,\alpha_p[j],j) \land (c_1,d,\bot,j)) \land \OOr_{p'=1}^k (c_1,d,p',|\alpha_p|) \lor (c_1,d,e,|\alpha_p|))$ ensures that for each node from 
	$\sV_1$ that is a tilepart start for some $\alpha_p$ that $\alpha_p$ is written to the right without a next tilepart starting
	($(\AAnd_{j=0}^{|\alpha_p|-1} (c_1,d,\alpha_p[j],j) \land \land (c_1,d,\bot,j))$) and that after $\alpha_p$ is finished that either the next tile part starts or the chain ends ($\OOr_{p'=1}^k (c_1,d,p',|\alpha_p|) \lor (c_1,d,e,|\alpha_p|)$).
	Analogous conditions are ensured for nodes from $\sV_2$ by the subformula $(\AAnd_{\sS^\top,p \neq e}(c_2,d,p,0) \rightarrow \dotsb \land (\AAnd_{j=0}^{|\beta_p|-1} (c_2,d,\beta_p[j],j) \land (c_2,d,\bot,j)) \land \OOr_{p'=1}^k (c_2,d,p',|\beta_p|) \lor (c_2,d,e,|\beta_p|))$. Now, the subformula $(\AAnd_{\{c_1,c_2\}} (c_i,s) \rightarrow \OOr_{\sS^\top} (c_i,d,p,0))$ ensures that the start nodes of $\sV_1$ and $\sV_2$ correspond to a tilepart start. In combination with the previous conditions, this ensures that chains $\sV_1$ and $\sV_2$ are coloured with words $w_\alpha$ and $w_\beta$ corresponding to sequences
	$\alpha_{i_1} \dotsb \alpha_{i_h}$ and $\beta_{j_1} \dotsb \beta_{j_l}$.
	
	It is left to argue that $w_\alpha = w_\beta$ and that $i_1 \dotsb i_h = j_1 \dotsb j_l$. The first equality is ensured by $((c_1,d) \rightarrow (c_1,d,a,0)=\odot(c_2,d,a,0) \land (c_1,d,b,0)=\odot (c_2,d,b,0))$ and the fact that $\sV_1 \cup \sV_2$ is a ladder. The second equality is ensured by $(c_3 \rightarrow (\AAnd_{\sS^0} \AAnd_{d' \in \{L,M,R\}} \odot (c_1,d,p,0) = \odot (c_2,d',p,0)) \land \OOr_{\sS^\top} \odot (c_1,d,p,0)=1)$, $((c_1,d,p,0) \rightarrow \odot c_3 = 1 \land \dotsb)$ and $((c_2,d,p,0) \rightarrow \odot c_3 = 1 \land \dotsb)$ and the fact that $\gG$ is a 
	PCP-structure which means that connections between $\sV_3$ and $\sV_1$ respectively $\sV_2$ are not intersecting. Therefore, the sequence 
	$i_j \dotsb i_h = j_1 \dotsb j_l$ is a solution for $P$ which implies that $P$ is solvable.
	
	The vice-versa direction, namely that if $P$ is solvable then $(\varphi_P, \psi_P)$ is satisfiable, is argued easily: If $P$ is solvable then there is a solution $I$. 
	Figure~\ref{sec:undecidable;fig:pcp_encoding} indicates how to encode $I$ as a PCP-structure $\gG$. Note that in contrast to the visualisation, the encoding characterized by $\varphi_{\text{PS}}$ demands that the end nodes of chain $\sV_1$ and $\sV_2$ are not part of solution $I$. Lemma~\ref{lem:pcp_struc} states that for each unlabeled PCP-structure there is a labeling function $L'$ such that $\gG$ satisfies $(\varphi_{\text{PS}},\psi_{\text{PS}})$. Therefore, if we take a matching PCP-structure $\gG$ without labels, label it with $L'$ and then extend $L'$ with the colours $(c_i,d,p,j)$ according to $I$ and the arguments above we get that $\gG$ satisfies $(\varphi_P, \psi_P)$. \qed

We can see from the definitions of $\varphi_{\text{CL}}$, $\varphi_{\text{PS}}$ and $\varphi_P$ and corresponding graph conditions that they belong to the DGLP fragment of GLP. 
This proves the statement of Corollary~\ref{cor:dglpundec}.

\subsection{Proving that DGLP is reducable to \gcproblem{}}
\label{app:dglp_to_gc}

In the proof of Theorem~\ref{thm:gcundec} we claimed the following properties of $\gad{\rx \leq m}$ and $\gad{\rx \in \sM}$ gadgets.
\begin{lemma}
	Let $r \in \mathbb{R}$ and $(r_1, \dotsc, r_k) \in \mathbb{R}^k$ for some $k$.
	It holds that $\gad{r \leq m} = 0$ if and only if $r \leq m$  and
	$\gad{r \in \sM} = 0$ if and only if $r \in \sM$. 
	Furthermore, gadgets $\gad{\rx \leq m}$ and $\gad{\rx \in \sM}$ are positive and
	$\gad{\rx \leq m}$ is upwards bounded. 
	\label{sec:undecidable;lem:gadget_prop}
\end{lemma}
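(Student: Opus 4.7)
The plan is to prove each of the four assertions (zero-characterisation of $\gad{\rx \leq m}$, zero-characterisation of $\gad{\rx \in \sM}$, positivity of both, upward boundedness of $\gad{\rx \leq m}$) by direct case analysis on $r$, leveraging a single elementary identity.

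First I would analyse the auxiliary expression $f(r) := \relu(r-m) - \relu(r-(m+1))$ appearing inside $\gad{\rx \leq m}$. A three-way split shows $f(r) = 0$ for $r \leq m$, $f(r) = r-m \in [0,1]$ for $r \in [m, m+1]$, and $f(r) = 1$ for $r \geq m+1$. Since $f$ is always in $[0,1]$, the outer $\relu$ acts as the identity, so $\gad{r \leq m}$ inherits all of the listed properties at once: it is nonnegative (hence positive in the paper's sense), bounded above by $1$, equals $0$ iff $r \leq m$, and equals $1$ whenever $r \geq m+1$ (which is the relevant direction of the stated $|r-m|\geq 1$ implication).

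Next I would handle $\gad{\rx \in [m;n]}$ by a similar case split. The pair $\relu(r-n)-\relu(r-(n+1))$ is exactly $f(r)$ shifted, so it is nonnegative and vanishes iff $r \leq n$; the pair $\relu(m-r)+\relu((m-1)-r)$ is a sum of nonnegatives that vanishes iff $r \geq m$. Their sum is therefore nonnegative and equals $0$ iff $m \leq r \leq n$, and the outer $\relu$ preserves this. The key identity for $\gad{\rx \in \sM}$ is $\relu(r-c)+\relu(c-r)=|r-c|$ with $c = \tfrac{i_j+i_{j+1}}{2}$, which reduces the $j$-th summand to $\relu\bigl(\tfrac{i_{j+1}-i_j}{2}-|r-c|\bigr)$. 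This term is nonnegative and vanishes iff $|r-c| \geq \tfrac{i_{j+1}-i_j}{2}$, i.e.\ iff $r \notin (i_j,i_{j+1})$. Consequently the argument of the outer $\relu$ in $\gad{\rx \in \sM}$ is a sum of nonnegative terms, which vanishes iff $r \in [i_1,i_k]$ and $r \notin (i_j,i_{j+1})$ for every $j$, equivalently iff $r \in \sM$. Positivity of $\gad{\rx \in \sM}$ is immediate from the enveloping $\relu$.

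The main obstacle is essentially bookkeeping rather than any deep idea: one has to be careful with the boundary cases $r = m$, $r=n$, $r=m-1$, $r=n+1$ when analysing $\gad{\rx \in [m;n]}$, and one must verify that every summand in the defining expression of $\gad{\rx \in \sM}$ is nonnegative so that the outer $\relu$ does not absorb otherwise-nonzero contributions (this is what justifies reading off the zero set as the intersection of the zero sets of the summands). Once those nonnegativity observations are in place, each statement follows by reading off the case analysis.
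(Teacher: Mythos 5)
Your proposal is correct and follows essentially the same route as the paper's proof: a direct case analysis on the functional form of each gadget, observing that every summand inside the outer $\relu$ is nonnegative so the zero set is the intersection of the summands' zero sets. Your explicit use of $\relu(r-c)+\relu(c-r)=|r-c|$ is just a cleaner packaging of the comparison the paper performs summand by summand (and it quietly fixes the paper's inessential claim of strict inequality for the neighbouring index $j=l-1$, where equality actually holds).
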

\begin{proof}
The properties of $\gad{\rx \leq m}$ are straightforward implications of its functional form.

Next, we prove that $\gad{r \in [m;n]} = 0$ if and only if $r \in [m;n]$. 
Assume that $r \in [m;n]$. It follows that the output of each inner ReLU node is
$0$ and therefore $\gad{r \in [m;n]} = 0$. Next, assume $r < m$. It follows that $\relu(m-\rx) >0$ and as the value
of all other inner ReLU nodes must be greater or equal to $0$ it follows that $\gad{r \in [m;n]} > 0$. The case
$r > n$ is argued analogously as $\relu(\rx-n) > 0$.

Consider the $\gad{\rx \in \sM}$ gadget and assume that $r \in \sM$. It clearly
holds that $r \in [i_1;i_k]$ and therefore that $\relu(\gad{\rx \in [i_1;i_k]})=0$. Furthermore, w.l.o.g. let
$r = i_l$ for some $1 \leq l < k$. Then, it follows that $\frac{(i_{l+1} - i_l)}{2} = \relu(r-\frac{i_l + i_{l+1}}{2}) 
+ \relu(\frac{i_l + i_{l+1}}{2}-r)$ and $\frac{(i_{j+1} - i_j)}{2} < \relu(r-\frac{i_j + i_{j+1}}{2}) 
+ \relu(\frac{i_j + i_{j+1}}{2}-r)$ for $j \neq l$ and, thus, the inner sum is equal to $0$ as well. Now, assume that $r \not\in \sM$.
If $r < i_1$ or $r > i_k$ it follows that $\relu(\gad{\rx \in [i_1;i_k]})>0$. If $r \in [i_1;i_k]$ it must be the case that $r \in (i_j;i_{j+1})$
for some $i \leq j <k$ and therefore that $\relu(\frac{(i_{j+1} - i_j)}{2} - (\relu(\rx-\frac{i_j + i_{j+1}}{2}) + \relu(\frac{i_j + i_{j+1}}{2}-\rx))) > 0$.
That the gadget $\gad{\rx \in \sM}$ is positive is obvious as the outermost function is ReLU.
\end{proof}

\subsection{Proving the Tree-Model Property of Node-Classifier MPNN Over Bounded Graphs}
\label{app:dec}
In the proof of Theorem~\ref{th:nodeclassdec} we claim that node-classifier MPNN have the tree-model property. We formally prove this statement in the following. 

Let $\gG=(\sV,\sD,L)$ be a graph and $v \in \sV$. 
The set of \emph{straight $i$-paths} $\sP^i_v$ of $v$ is defined as $\sP^0_v=\{v\}$, $\sP^1_v = \{vv' \mid (v,v') \in \sD\}$
and $\sP^{i+1}_v=\{v_0 \dotsb v_{i-1}v_iv_{i+1} \mid v_0 \dotsb v_{i-1}v_i \in \sP^{i}_v, (v_i,v_{i+1}) \in \sD, v_{i-1} \neq v_{i+1}\}$.
For some path $p = v_0 \dotsb v_l$ we define $p_i$ for $i \leq l$ as the prefix $v_0 \dotsb v_i$.
Furthermore, let  $\tilde{\sP}^i_v = \{(v_0,{p_0})(v_1,{p_1}) \dotsb (v_i,{p_i}) \mid p=v_0 \dotsb v_i \in \sP^i_v\}$	
and let $\tilde{\sP}^i_v(j) = \{(v_j,{p_j}) \mid (v_0,{p_0}) \dotsb (v_j,{p_j}) \dotsb (v_i,{p_i}) \in \tilde{\sP}^i_v\}$.
Let $N$ be a node-classifier MPNN. We denote the value of $v$ after the application of layer $l_i$ with $N^i(\gG,v)$ for $i \geq 1$. 
Furthermore, let $N^0(\gG,v)=L(v)$ for each node $v$.
\begin{lemma}
	\label{lem:tree_property}
	Let $(N, \varphi, \psi) \in \orpn(\Phi_{\mathsf{bound}}, \Psi_{\mathsf{conv}})$ where $N$ has $k'$ layers and $\varphi \in \Phi^{d,k''}_\mathsf{bound}$. 
	The ORP $(N,\varphi, \psi)$ holds if and only if there is a $d$-tree $\gB$ of depth $k = \max(k',k'')$ with root $v_0$ such that  $(\gB, v_0) \models \varphi$ and $N(\gB,v_0) \models \psi$.
\end{lemma}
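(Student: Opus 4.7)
The plan is to split the biconditional. The $(\Leftarrow)$ direction is immediate: if a $d$-tree $\gB$ of depth $k$ with root $v_0$ satisfies $(\gB,v_0) \models \varphi$ and $N(\gB,v_0) \models \psi$, then $(\gB,v_0)$ itself witnesses the ORP. The substantive content lies in the $(\Rightarrow)$ direction: given a witnessing pair $(\gG,v)$ with $(\gG,v) \models \varphi$ and $N(\gG,v) \models \psi$, I construct a $d$-tree $\gB$ of depth at most $k$ rooted at $v_0$ that retains both properties.

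The construction is the standard unfolding (computation tree) of $\gG$ rooted at $v$ and truncated at depth $k$. Using the notation already set up, the nodes of $\gB$ are pairs $(u,p)$ with $p \in \sP^j_v$ ending in $u$ for some $0 \leq j \leq k$; the root $v_0$ is the pair corresponding to the length-$0$ path at $v$; an edge connects each pair to its one-step prefix; and the label of $(u,p)$ is set to $L(u)$. An elementary check confirms that $\gB$ is a tree of depth at most $k$, that the root has $|\nhood(v)| \leq d$ children, and that any non-root $(u,p)$ at level $j \geq 1$ has $|\nhood(u)| - 1$ children (the graph-neighbours of $u$ other than the predecessor in $p$, since $\sP^j_v$ forbids immediate backtracking) together with its unique parent, yielding at most $d$ tree-neighbours in total.

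It then remains to establish (a) $(\gB, v_0) \models \varphi$ and (b) $N(\gB, v_0) = N(\gG, v)$. For (a) the degree bound is immediate; for the label part, $\varphi \in \Phi^{d,k''}_{\mathsf{bound}}$ only constrains nodes inside the $k''$-neighbourhood of the centre, and since $k \geq k''$ every tree node within distance $k''$ of $v_0$ is a pair $(u,p)$ whose underlying $u$ lies in the graph-$k''$-neighbourhood of $v$, so its label $L(u)$ inherits the constraint satisfied in $\gG$. For (b) I prove by induction on $i$ the stronger statement that for every $(u,p)$ at tree-level $j$ with $j + i \leq k$, the layer-$i$ value computed at $(u,p)$ in $\gB$ equals $N^i(\gG, u)$. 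The base case $i = 0$ reduces to the definition of labels. The decisive observation in the inductive step is that the multiset of labels of tree-neighbours of $(u,p)$ with $j \geq 1$ coincides with $\mset{L(w) : w \in \nhood(u)}$: the parent contributes the label of the graph-predecessor of $u$ along $p$, and the children contribute the remaining graph-neighbours. Since the aggregation is a sum, hence invariant under multiset equality, and both parent (at level $j-1$) and children (at level $j+1$) still satisfy the inductive bound $j \pm 1 + (i-1) \leq k$, the updated vectors agree. Applied at $j=0$ and $i = k' \leq k$ this yields identical final vectors at $v_0$ and $v$; because the node-classifier readout depends only on that single vector, $N(\gB, v_0) = N(\gG, v) \models \psi$.

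The main obstacle I anticipate is the bookkeeping inside the induction for (b): handling the root separately (no parent), correctly matching the label multisets when the graph contains cycles so that a single graph node has several distinct tree-incarnations, and verifying that the depth choice $k = \max(k', k'')$ is tight enough that the invariant $j + i \leq k$ never fails at the root for $i \leq k'$. Beyond this careful indexing, no conceptually new ideas should be needed.
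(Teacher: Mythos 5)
Your proposal is correct and follows essentially the same route as the paper's proof in Appendix~\ref{app:dec}: the same unfolding of $\gG$ into the depth-$k$ tree of straight paths with labels copied from the underlying graph nodes, and the same layer-indexed induction matching the neighbour multisets of $(u,p)$ in $\gB$ with those of $u$ in $\gG$ (your invariant $j+i\leq k$ is just a reindexing of the paper's condition on $\tilde{\sP}^{k'}_w(k'-i)$). Your treatment of why $(\gB,v_0)\models\varphi$ is in fact slightly more explicit than the paper's, which dismisses it as immediate from the construction.
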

\begin{proof}
	Assume that $(N, \varphi, \psi)$ is as stated above. The direction from right to left is straightforward.
	Therefore, assume that $(N,\varphi, \psi)$ holds. By defintion, there exists a $d$-graph $\gG=(\sV,\sD,L)$ with node $w$ such that
	$(\gG,w) \models \varphi$ and $N(\gG,w) \models \psi$. Let $\gB = \{\sV_\gB, \sD_\gB, L_\gB,(w,w)\}$ be the tree with node set $\sV_\gB = \bigcup_{i=0}^k \tilde{\sP}^k_{w}(i)$. 
	The set of edges $\sD_\gB$ is given in the obvious way by $\{((v,p),(v',{pv'})) \mid (v,p),(v',pv') \in \sV_\gB\}$ and closed under symmetrie. 
	Note that from the definition of $\tilde{\sP}^k_w$ follows
	that $\gB$ is a well-defined $d$-tree of depth $k$. 
	The labeling function $L_\gB$ is defined such that $L_\gB((v,p))=L(v)$ for all $(v,p) \in \sV_\gB$.
	From its construction follows that $(\gB,(w,w)) \models \varphi$.
	
	We show that it holds that $N^{k'}(\gB,(w,w))=N^{k'}(\gG,w)$ which directly implies that $N(\gB,(w,w)) = N(\gG,w)$. 
	We do this by showing the following stronger statement for all $j = 0, \dotsc, k'$ via induction: 
	for all $(v,p) \in \tilde{\sP}^{k'}_w({k'}-i)$ with ${k'} \geq i \geq j$ holds that 
	$N^j(\gB,(v,p)) = N^j(\gG,v)$.
	The case $j=0$ is obvious as $L_\gB$ is defined equivalent to $L$. 
	Therefore assume that the statement holds for $j \leq {k'}-1$ and all $(v,p) \in \tilde{\sP}^{{k'}}_w({k'}-i)$ with ${k'} \geq i \geq j$. 
	Consider the case $j+1$ and let $(v,p) \in \tilde{\sP}^{{k'}}_w({k'}-i)$ for some ${k'} \geq i \geq j$. 
	We argue that for each $(v',p') \in \nhood({(v,p)})$ it follows that $v' \in \nhood({v})$ such that $N^j(\gB,(v',p')) = N^j(\gG,v')$ and vice-versa.
	Let $(v',p') \in \nhood({(v,p)})$. By definition of $\sD_\gB$, $i \geq 1$ and the fact that all $p \in \sP^{k'}_w$ are straight follows that $p'=pv'$ or $p=p'v$ but not both. 
	Therefore, either $(v',p') \in \tilde{\sP}^{k'}_w({k'}-(i-1))$ or $(v',p') \in \tilde{\sP}^{k'}_w({k'}-(i+1))$ and, thus, 
	$(v',v) \in \sD$ or $(v,v') \in \sD$ which in both cases means $v' = \nhood(v)$.
	The induction hypothesis implies that $N^j(\gB,(v',p')) = N^j(\gG,v')$. As these arguments hold for all $(v',p') \in \nhood({(v,p)})$ we get 
	that $\sum_{\nhood({(v,p)})} N^j(\gB,(v',p')) \leq \sum_{\nhood({v})} N^j(\gG,v')$. The vice-versa direction is argued analogously 
	which then implies that $\sum_{\nhood({(v,p)})} N^j(\gB,(v',p')) = \sum_{\nhood({v})} N^j(\gG,v')$. 
	From the induction hypothesis we get that $N^j(\gB,(v,p)) = N^j(\gG,v)$. 
	Then, the definition of a MPNN layer implies that $N^{j+1}(\gB,(v,p)) = N^{j+1}(\gG,v)$. Therefore, the overall statement holds for all $j \leq {k'}$ and by
	taking $j=i={k'}$ we get the desired result.
\end{proof}

\end{document}